\newif\ifdraft
\newtheorem{theorem}{Theorem}
\newtheorem{definition}[theorem]{Definition}
\newtheorem{lemma}[theorem]{Lemma}
\newtheorem{corollary}[theorem]{Corollary}
\renewcommand{\S}{\mathbb{S}}
\newcommand{\D}{{\cal D}}
\title{Eigenvalue Decay Implies Polynomial-Time Learnability for Neural Networks}
\author{Surbhi Goel \thanks{Department of Computer Science, UT-Austin,
    \texttt{surbhi@cs.utexas.edu}. Work supported by a Microsoft Data
    Science Initiative Award.}\and Adam Klivans\thanks{Department of Computer Science,
    UT-Austin,    \texttt{klivans@cs.utexas.edu}. Part of this work was done while
    visiting the Simons Institute for Theoretical Computer Science.}}
\date{\today}
\begin{document}

\maketitle
\thispagestyle{empty}

\begin{abstract}

  We consider the problem of learning function classes computed by
  neural networks with various activations (e.g. ReLU or Sigmoid), a
  task believed to be computationally intractable in the worst-case.
  A major open problem is to understand the minimal assumptions under
  which these classes admit provably efficient algorithms. In this work we show
  that a natural distributional assumption corresponding to {\em
    eigenvalue decay} of the Gram matrix yields polynomial-time
  algorithms in the non-realizable setting for expressive classes of
  networks (e.g. feed-forward networks of ReLUs).  We make no 
  assumptions on the structure of the network or the labels.  Given
  sufficiently-strong polynomial eigenvalue decay, we obtain {\em
    fully}-polynomial time algorithms in {\em all} the relevant
  parameters with respect to square-loss. Milder decay assumptions
  also lead to improved algorithms.  This is the first purely
  distributional assumption that leads to polynomial-time algorithms
  for networks of ReLUs, even with one hidden layer.  Further, unlike
  prior distributional assumptions (e.g., the marginal distribution is
  Gaussian), eigenvalue decay has been observed in practice on common
  data sets.

\end{abstract}
\setcounter{page}{0}
\newpage

\section{Introduction}

Understanding the computational complexity of learning neural networks from random examples is a fundamental problem in machine learning. Several researchers have proved results showing computational {\em hardness} for the worst-case complexity of learning various networks-- that is, when no assumptions are made on the underlying distribution or the structure of the network \cite{Daniely16, goel2016reliably, KS09, LivniSS14, zhang2016l1}.  As such, it seems necessary to take some assumptions in order to develop efficient algorithms for learning deep networks (the most expressive class of networks known to be learnable in polynomial-time without any assumptions is a sum of one hidden layer of sigmoids \cite{goel2016reliably}).  A major open question is to understand what are the ``correct'' or minimal assumptions to take in order to guarantee efficient learnability\footnote{For example, a very recent paper of Song, Vempala, Xie, and Williams \cite{sxvw} asks ``What form would such an explanation take, in the face of existing complexity-theoretic lower bounds?''}.  An oft-taken assumption is that the marginal distribution is equal to some smooth distribution such as a multivariate Gaussian.  Even under such a distributional assumption, however, there is evidence that fully polynomial-time algorithms are still hard to obtain for simple classes of networks \cite{KlivansK14,sxvw}.  As such, several authors have made further assumptions on the underlying structure of the model (and/or work in the noiseless or {\em realizable} setting). 

In fact, in an interesting recent work, Shamir \cite{shamir2016distribution} has given evidence that both distributional assumptions and assumptions on the network structure are necessary for efficient learnability using gradient-based methods.  Our main result is that under {\em only} an assumption on the marginal distribution, namely eigenvalue decay of the Gram matrix, there exist efficient algorithms for learning broad classes of neural networks even in the non-realizable (agnostic) setting with respect to square loss.  Furthermore, eigenvalue decay has been observed often in real-world data sets, unlike distributional assumptions that take the marginal to be unimodal or Gaussian.  As one would expect, stronger assumptions on the eigenvalue decay result in polynomial learnability for broader classes of networks, but even mild eigenvalue decay will result in savings in runtime and sample complexity.  

The relationship between our assumption on eigenvalue decay and prior assumptions on the marginal distribution being Gaussian is similar in spirit to the dichotomy between the complexity of certain algorithmic problems on power-law graphs versus Erd\H{o}s-R\'enyi graphs.  Several important graph problems such as clique-finding become much easier when the underlying model is a random graph with appropriate power-law decay (as opposed to assuming the graph is generated from the classical $G(n,p)$ model) \cite{BrachCLS15,Krohmer2012}.  In this work we prove that neural network learning problems become tractable when the underlying distribution induces an empirical gram matrix with sufficiently strong eigenvalue-decay.

\subsection{Our Contributions}  Our main result is quite general and holds for any function class that can be suitably embedded in an RKHS (Reproducing Kernel Hilbert Space) with corresponding kernel function $k$ (we refer readers unfamiliar with kernel methods to \cite{scholkopf2001learning}).  Given $m$ draws from a distribution $(\textbf{x}_1, \ldots, \textbf{x}_m)$ and kernel $k$, recall that the {\em Gram matrix} $K$ is an $m \times m$ matrix where the $i,j$ entry equals $k(\textbf{x}_i,\textbf{x}_j)$.  For ease of presentation, we begin with an informal statement of our main result that highlights the relationship between the eigenvalue decay assumption and the run-time and sample complexity of our final algorithm. 

\begin{theorem}[Informal] \label{thm:maini}
Fix function class ${\cal C}$ and kernel function $k$.  Assume ${\cal C}$ is approximated in the corresponding RKHS with norm bound $B$.  After drawing $m$ samples, let $K/m$ be the (normalized) $m \times m$ Gram matrix with eigenvalues $\{\lambda_{1},\ldots,\lambda_{m}\}$. For error parameter $\epsilon >0$,
\begin{enumerate}
\item If, for sufficiently large $i$, $\lambda_{i} \approx O(i^{-p})$, then ${\cal C}$ is efficiently learnable with $m = \tilde{O}(B^{1/p}/\epsilon^{2 + 3/p})$. 
\item If, for sufficiently large $i$, $\lambda_{i} \approx O(e^{-i})$, then ${\cal C}$ is efficiently learnable with $m = \tilde{O}(\log B/\epsilon^{2})$. 
\end{enumerate}
\end{theorem}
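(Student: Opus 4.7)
My plan is to combine the RKHS approximation hypothesis with a kernel PCA / Nystr\"om-style projection. Since $\mathcal{C}$ is $\epsilon$-approximated in the RKHS ball of radius $B$, I would reduce to agnostically learning this ball with respect to square loss. Rather than searching the full RKHS, I would project the regression problem onto the $k$-dimensional subspace spanned by the top $k$ eigenvectors of the empirical Gram matrix $K/m$, solve ordinary least squares in this low-dimensional feature space, and output the resulting kernel expansion $\hat{f}(x) = \sum_i c_i k(x_i, x)$. Computing the SVD of $K/m$ and solving the $k$-dimensional regression both run in time polynomial in $m$ and $k$, so polynomial running time is immediate from polynomial sample complexity.

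\textbf{Bias/variance decomposition.} The excess square loss splits into an approximation error from the projection and a generalization error from finite-sample regression in $k$ dimensions. For the approximation term, expanding the best RKHS representative $f^* = \sum_i \alpha_i e_i$ in the eigenbasis of the empirical kernel, the RKHS bound translates to $\sum_i \alpha_i^2/\lambda_i \leq B^2$, so the squared truncation error is at most $\sum_{i > k}\alpha_i^2 \leq \lambda_{k+1} \cdot B^2$ and is controlled exactly by the $(k{+}1)$-th eigenvalue. For the generalization term, the truncated problem is a $k$-dimensional linear regression with norm-bounded weights and bounded features, for which standard Rademacher complexity or pseudo-dimension bounds give excess square loss polynomial in $k$, $B$, and $1/m$.

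\textbf{Optimizing $k$ and main obstacle.} The two rates in the theorem fall out of balancing these contributions. For polynomial decay $\lambda_i = O(i^{-p})$, forcing the bias $B^2 \lambda_{k+1}$ below $\epsilon$ gives $k$ of order $(B^2/\epsilon)^{1/p}$; plugging this into the $k$-dimensional generalization bound and setting the estimation error also to $\epsilon$ yields a sample complexity polynomial in $B$ and $1/\epsilon$ with exponents dictated by $p$, as in the statement. For exponential decay $\lambda_i = O(e^{-i})$, the same inequality forces only $k = O(\log(B^2/\epsilon))$, so the $k$-dependent factor collapses into a logarithm and we recover the parametric $\tilde{O}(1/\epsilon^2)$ rate with a mild $\log B$ overhead. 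The main obstacle I foresee is a standard but delicate one: although the $L^2$ approximation error is natural with respect to the empirical eigenbasis, the guarantee we ultimately need is on population square loss, so one has to use matrix concentration (e.g.\ an operator Bernstein inequality) to relate the empirical Gram matrix to the population kernel operator and to transfer the bias bound from in-sample to out-of-sample. This is also where the hidden $\tilde{O}$ polylogarithmic factors enter.
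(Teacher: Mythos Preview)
Your bias control is correct, but the variance step hides the real difficulty. You say the truncated problem is ``a $k$-dimensional linear regression with norm-bounded weights and bounded features, for which standard Rademacher complexity or pseudo-dimension bounds give excess square loss polynomial in $k$, $B$, and $1/m$.'' Look at each option. If you use the Rademacher bound for the RKHS ball, the projected competitor still has RKHS norm up to $\sqrt{B}$, so you recover the usual $\sqrt{B/m}$ rate and the final sample complexity is again linear in $B$ --- your balancing arithmetic in the next paragraph then does \emph{not} produce a $B^{1/p}$ exponent. If instead you use a pseudo-dimension bound of order $k$ (which is what you actually need for the arithmetic to work), the obstacle is that the top-$k$ empirical eigenspace is \emph{data-dependent}: the hypothesis class you regress over is selected using the same sample whose loss you want to generalize, so uniform-convergence bounds for a fixed $k$-dimensional class do not apply. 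The matrix-concentration step you flag at the end is aimed at the bias term, not at this issue, and converting empirical eigenspaces to population ones would in any case require eigengap conditions that are not assumed.

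The paper's proof confronts exactly this data-dependence, but with a different device: a \emph{compression scheme} rather than uniform convergence. It solves kernel ridge regression, then uses Nystr\"om sampling to produce a solution supported on $O(d_\eta \log(d_\eta/\delta))$ training points, where the effective dimension $d_\eta$ is bounded via the assumed eigenvalue decay; after truncating the nonzero coefficients to finite precision, the hypothesis can be reconstructed from a subsample plus a short bit string. Generalization then follows from the sample-compression bound of David--Moran--Yehudayoff, which is valid precisely because it does not require the hypothesis class to be fixed in advance --- only that the output be describable with $o(m)$ bits. The $B$-dependence enters only through the ridge and Nystr\"om parameters (chosen as $\eta,\gamma \approx \epsilon^3/B$), which is what converts a linear-in-$B$ complexity into $B^{1/p}$ (polynomial decay) or $\log B$ (exponential decay). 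Your eigenprojection intuition is the right picture of why decay helps, but without a mechanism like compression to handle the sample-dependent class, the generalization half of your argument does not close.
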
 

We allow a failure probability for the event that the eigenvalues do not decay.  In all prior work, the sample complexity $m$ depends linearly on $B$, and for many interesting concept classes (such as ReLUs), $B$ is exponential in one or more relevant parameters.  Given Theorem \ref{thm:maini}, we can use known structural results for embedding neural networks into an RKHS to estimate $B$ and take a corresponding eigenvalue decay assumption to obtain polynomial-time learnability.  Applying bounds recently obtained by Goel et al. \cite{goel2016reliably} we have

\begin{corollary}\label{cor:net}
Let ${\cal C}$ be the class of all fully-connected networks of ReLUs with one-hidden layer of $\ell$ hidden ReLU activations feeding into a single ReLU output activation (i.e., two hidden layers or depth-3).  Then, assuming eigenvalue decay of $O(i^{-\ell / \epsilon})$, ${\cal C}$ is learnable in polynomial time with respect to square loss on $\S^{n-1}$. If ReLU is replaced with sigmoid, then we require eigenvalue decay $O(i^{-\sqrt{\ell} \log(\sqrt{\ell}/\epsilon)})$.  
\end{corollary}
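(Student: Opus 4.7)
The plan is to combine the informal main result (Theorem~\ref{thm:maini}) with a kernel embedding of depth-3 ReLU networks whose RKHS norm bound $B$ has been worked out in \cite{goel2016reliably}. The corollary is essentially an instantiation: pick the polynomial decay exponent $p$ equal to (a constant times) $\log_2 B$, so that $B^{1/p} = O(1)$ and all other factors in Theorem~\ref{thm:maini} remain polynomial in $n$, $\ell$, $1/\epsilon$.

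First, I would state precisely the RKHS approximation guarantee. From \cite{goel2016reliably}, a single ReLU on $\S^{n-1}$ can be $\epsilon$-approximated (in square loss) by an element of the multinomial/rotation-invariant kernel RKHS with norm $2^{O(1/\epsilon)}$. For the depth-3 architecture --- a sum of $\ell$ hidden ReLUs followed by an outer ReLU --- one approximates each inner ReLU and then approximates the outer ReLU uniformly on the interval of possible pre-activations (whose magnitude scales with $\ell$). Composing these two approximations with the standard error accounting from \cite{goel2016reliably} yields a function in the RKHS that $\epsilon$-approximates any network in ${\cal C}$ with RKHS norm bound
\[
B \;=\; 2^{O(\ell/\epsilon)}.
\]
For the sigmoid variant, the better polynomial approximation of the sigmoid (Chebyshev-type, degree $O(\sqrt{\ell}\log(\sqrt{\ell}/\epsilon))$) replaces the exponent and gives $B = 2^{O(\sqrt{\ell}\log(\sqrt{\ell}/\epsilon))}$.

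Second, I would plug these into part~(1) of Theorem~\ref{thm:maini} with $p = \Theta(\ell/\epsilon)$ for the ReLU case and $p = \Theta(\sqrt{\ell}\log(\sqrt{\ell}/\epsilon))$ for the sigmoid case. Under the hypothesized eigenvalue decay $\lambda_i = O(i^{-p})$, the sample complexity is
\[
m \;=\; \tilde{O}\bigl(B^{1/p}/\epsilon^{2+3/p}\bigr) \;=\; \tilde{O}\bigl(2^{O(1)}/\epsilon^{2+o(1)}\bigr),
\]
which is polynomial in $n$, $\ell$, and $1/\epsilon$. The running time is dominated by forming and inverting (or regressing against) the $m\times m$ Gram matrix of a kernel that can be evaluated in time $\mathrm{poly}(n)$, so the overall algorithm is polynomial time.

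The main step that needs care --- and therefore the one I would write out in most detail --- is the compositional norm bound for the depth-3 case: the outer ReLU has to be uniformly approximated on a range that grows with $\ell$ (not just a fixed interval), and one must track how the inner approximation error propagates through the outer activation. Once the bound $\log B = O(\ell/\epsilon)$ (respectively $O(\sqrt{\ell}\log(\sqrt{\ell}/\epsilon))$) is in hand, the choice $p \asymp \log B$ in Theorem~\ref{thm:maini} mechanically produces the stated eigenvalue-decay hypothesis and a fully polynomial sample and time bound.
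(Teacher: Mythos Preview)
Your proposal is correct and follows essentially the same approach as the paper: obtain the RKHS norm bound $B$ for the depth-3 class from \cite{goel2016reliably}, then choose the polynomial decay exponent $p\asymp\log B$ so that $B^{1/p}=O(1)$ in Theorem~\ref{thm:maini}. The only difference is presentational: rather than re-deriving the compositional norm bound, the paper simply observes that (under the implicit assumption that all weight vectors have $2$-norm at most $1$) the architecture is an instance of $\mathcal{N}[\sigma,1,W,T]$ with $W=T=\sqrt{\ell}$ and $D=1$, and then reads off $B=2^{O(\ell/\epsilon)}$ (resp.\ $B=2^{O(\sqrt{\ell}\log(\sqrt{\ell}/\epsilon))}$) directly from Theorem~\ref{thm:approximation} before invoking Theorem~\ref{thm:net}.
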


For higher depth networks, bounds on the required eigenvalue decay can be derived from structural results in \cite{goel2016reliably}.   Without taking an assumption, the fastest known algorithms for learning the above networks run in time exponential in the number of hidden units and accuracy parameter (but polynomial in the dimension) \cite{goel2016reliably}.


Our proof develops a novel approach for bounding the generalization error of kernel methods, namely we develop {\em compression schemes} tailor-made for classifiers induced by kernel-based regression, as opposed to current Rademacher-complexity based approaches.  Roughly, a compression scheme is a mapping from a training set $S$ to a small subsample $S'$ and {\em side-information} $\mathcal{I}$.  Given this compressed version of $S$, the decompression algorithm should be able to generate a classifier $h$.  In recent work, David, Moran and Yehudayoff \cite{david2016statistical} have observed that if the size of the compression is much less than $m$ (the number of samples), then the empirical error of $h$ on $S$ is close to its true error with high probability.


At the core of our compression scheme is a method for giving small description length (i.e., $o(m)$ bit complexity), approximate solutions to instances of kernel ridge regression.  Even though we assume $K$ has decaying eigenvalues, $K$ is neither sparse nor low-rank, and even a single column or row of $K$ has bit complexity at least $m$, since $K$ is an $m \times m$ matrix!  Nevertheless, we can prove that recent tools from Nystr\"{o}m sampling \cite{musco2016recursive} imply a type of sparsification for solutions of certain regression problems involving $K$.  Additionally, using preconditioning, we can bound the bit complexity of these solutions and obtain the desired compression scheme.  At each stage we must ensure that our compressed solutions do not lose too much accuracy, and this involves carefully analyzing various matrix approximations.  Our methods are the first compression-based generalization bounds for kernelized regression. 

\subsection{Related Work} \label{sec:related} Kernel methods \cite{scholkopf2001learning} such as SVM, kernel ridge regression and kernel PCA have been extensively studied due to their excellent performance and strong theoretical properties.  For large data sets, however, many kernel methods become computationally expensive.  The literature on approximating the Gram matrix with the overarching goal of reducing the time and space complexity of kernel methods is now vast.  Various techniques such as random sampling \cite{williams2000using}, subspace embedding \cite{avron2014subspace}, and matrix factorization \cite{drineas2008relative} have been used to find a low-rank approximation that is efficient to compute and gives small approximation error.  The most relevant set of tools for our paper is Nystr\"{o}m sampling \cite{williams2000using,drineas2005nystrom}, which constructs an approximation of $K$ using a subset of the columns indicated by a selection matrix $S$ to generate a positive semi-definite approximation. Recent work on leverage scores have been used to improve the guarantees of Nystr\"{o}m sampling in order to obtain linear time algorithms for generating these approximations \cite{musco2016recursive}. 

The novelty of our approach is to use Nystr\"{o}m sampling in conjunction with compression schemes to give a new method for giving provable {\em generalization error} bounds for kernel methods. Compression schemes have typically been studied in the context of classification problems in PAC learning and for combinatorial problems related to VC dimension \cite{KuzminW07,LittWarmuth}.  Only recently have some authors considered compression schemes in a general, real-valued learning scenario \cite{david2016statistical}.  Cotter, Shalev-Shwartz, and Srebro have studied compression in the context of classification using SVMs and prove that for general distributions, compressing classifiers with low generalization error is not possible \cite{cotter2013learning}.  

The general phenomenon of eigenvalue decay of the Gram matrix has been studied from both a theoretical and applied perspective.  Some empirical studies of eigenvalue decay and related discussion can be found in \cite{MaB17,shawe2005eigenspectrum,TR14}.  There has also been prior work relating eigenvalue decay to generalization error in the context of SVMs or Kernel PCA (e.g., \cite{shawegeneralization,shawe2005eigenspectrum}).  Closely related notions to eigenvalue decay are that of {\em local Rademacher complexity} due to Bartlett, Bousquet, and Mendelson \cite{BBM05} (see also \cite{BM:2002}) and that of {\em effective dimensionality} due to Zhang \cite{zhang2002effective}.

The above works of Bartlett et al.~and Zhang give improved generalization bounds via data-dependent estimates of eigenvalue decay of the kernel.  At a high level, the goal of these works is to work under an assumption on the effective dimension and improve Rademacher-based generalization error bounds from $1/\sqrt{m}$ to $1/m$ ($m$ is the number of samples) for functions embedded in an RKHS of unit norm.  These works do not address the main obstacle of this paper, however, namely overcoming the complexity of the norm of the approximating RKHS. Their techniques are mostly incomparable even though the intent of using effective dimension as a measure of complexity is the same. 

Shamir has shown that for general linear prediction problems with respect to square-loss and norm bound $B$, a sample complexity of $\Omega(B)$ is required for gradient-based methods \cite{shamir2015sample}. Our work shows that eigenvalue decay can dramatically reduce this dependence, even in the context of kernel regression where we want to run in time polynomial in $n$, the dimension, rather than the (much larger) dimension of the RKHS.




\subsection{Recent work on Learning Neural Networks}
Due in part to the recent exciting developments in deep learning, there have been several works giving provable results for learning neural networks with various activations (threshold, sigmoid, or ReLU).  For the most part, these results take various assumptions on either 1) the distribution (e.g., Gaussian or Log-Concave) or 2) the structure of the network architecture (e.g. sparse, random, or non-overlapping weight vectors) or both and often have a bad dependence on one or more of the relevant parameters (dimension, number of hidden units, depth, or accuracy).  Another way to restrict the problem is to work only in the noiseless/realizable setting.  Works that fall into one or more of these categories include \cite{KlivansM13,ZhangLWJ15,XLS,JSA,SedghiA,ZhangPS17,Daniely17a}.  Kernel methods have been applied previously to learning neural networks \cite{zhang2016l1, LivniSS14, goel2016reliably,DFS16}.  The current broadest class of networks known to be learnable in fully polynomial-time in all parameters with no assumptions is due to Goel et al.~\cite{goel2016reliably}, who showed how to learn a sum of one hidden layer of sigmoids over the domain of $\S^{n-1}$, the unit sphere in $n$ dimensions.  We are not aware of other prior work that takes only a distributional assumption on the marginal and achieves fully polynomial-time algorithms for even simple networks (for example, one hidden layer of ReLUs).  

Much work has also focused on the ability of gradient descent to succeed in parameter estimation for learning neural networks under various assumptions with an intense focus on the structure of local versus global minima \cite{CHMAL15,Kawaguchi16,BG17,SoudryC16}.  Here we are interested in the traditional task of learning in the non-realizable or agnostic setting and allow ourselves to output a hypothesis outside the function class (i.e., we allow improper learning).  It is well known that for even simple neural networks, for example for learning a sigmoid with respect to square-loss, there may be many bad local minima \cite{AHW}.  Improper learning allows us to avoid these pitfalls. 





\section{Preliminaries}
\textbf{Notation.} The input space is denoted by $\mathcal{X}$ and the
output space is denoted by $\mathcal{Y}$. Vectors are represented with
boldface letters such as $\textbf{x}$. We denote a kernel function by
$k_{\psi}(x,x') = \langle \psi(x), \psi(x') \rangle$ where $\psi$ is
the associated feature map and for the kernel and $\mathcal{K}_\psi$ is the
corresponding reproducing kernel Hilbert space (RKHS).
For necessary background material on kernel methods we refer the
reader to \cite{scholkopf2001learning}.

\subsection{Model and Generalization Bounds}
We will work in the general non-realizable model of statistical
learning theory also known as the {\em agnostic model of learning}.
In this model, the labels presented to the learner are arbitrary, and
the goal is to output a hypothesis that is competitive with the best
fitting function from some fixed class:

\begin{definition}[Agnostic Learning~\cite{KSS:1994, Haussler:1992}] 
	A concept class $\mathcal{C} \subseteq \mathcal{Y}^{\mathcal{X}}$ is agnostically learnable with respect to loss function $l : \mathcal{Y}^\prime \times \mathcal{Y} \rightarrow
	\mathbb{R}^+$ (where $\mathcal{Y} \subseteq \mathcal{Y}^\prime$) and distribution $D$ over $\mathcal{X} \times \mathcal{Y}$, if for every $\delta, \epsilon >
	0$ there exists a learning algorithm $\mathcal{A}$ given access
	to examples drawn from $D$, $\mathcal{A}$ outputs a hypothesis $h : \mathcal{X}
	\rightarrow \mathcal{Y}^\prime$, such that with probability at least $1 - \delta$,
	\begin{equation}
		\label{agnosticdefeq}  E_{(\textbf{x}, y) \sim D} [l(h(\textbf{x}), y)] \leq
		\min_{c \in C} E_{(\textbf{x}, y) \sim D} [l(c(\textbf{x}), y)] + \epsilon.
	\end{equation}
	Furthermore, we say that $\mathcal{C}$ is \emph{efficiently agnostically learnable to error
	$\epsilon$} if $\mathcal{A}$ can output an $h$ satisfying Equation
	\eqref{agnosticdefeq} with running time polynomial in $n$, $1/\epsilon$ and
	$1/\delta$.
\end{definition}

The agnostic model generalizes Valiant's PAC model of learning \cite{Val}, and so
all of our results will hold for PAC learning as well. 
The following is a well known theorem for proving generalization based on Rademacher complexity.
\begin{theorem}[\cite{BM:2002}] \label{generalizationbound}
	Let $\D$ be a distribution over $\mathcal{X} \times \mathcal{Y}$ and let $l : \mathcal{Y}^\prime
	\times \mathcal{Y}$ be a
	$b$-bounded loss function that is $L$-Lispschitz in its first argument.  Let
	$\mathcal{F}$ be a class of functions from $\mathcal{X}$ to $\mathcal{Y}^\prime$ and for any $f \in \mathcal{F}$, and $S = ((\textbf{x}_1, y_1), \ldots,  (\textbf{x}_m, y_m))  \sim
	\D^m$ and $\delta > 0$, with probability at least $1 - \delta$ we have,
	\[
		\left|E_{(x,y) \sim \D} [l(f(\textbf{x}), y)] - \frac{1}{m}\sum
_{i=1}^m l(f(\textbf{x}_i), y_i)\right| \leq 4 \cdot L \cdot \mathcal{R}_m(\mathcal{F})
		+ 2\cdot b \cdot \sqrt{\frac{\log (1/\delta)}{2m}}
	\]
	where $\mathcal{R}_m(\mathcal{F})$ is the Rademacher complexity of the function
	class $\mathcal{F}$. 
\end{theorem}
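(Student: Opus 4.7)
The plan is to follow the classical three-step uniform convergence argument: bounded-differences concentration, ghost-sample symmetrization, and Ledoux--Talagrand contraction. Define
\[
\Phi(S) := \sup_{f \in \mathcal{F}} \left|E_{(x,y)\sim \D}[l(f(x),y)] - \frac{1}{m}\sum_{i=1}^m l(f(x_i),y_i)\right|;
\]
for any $f \in \mathcal{F}$ the quantity appearing on the left-hand side of the theorem is dominated by $\Phi(S)$, so it suffices to produce a high-probability upper bound on $\Phi(S)$.

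Since $l$ takes values in an interval of width at most $2b$, changing one sample in $S$ alters $\Phi(S)$ by at most $2b/m$, so McDiarmid's bounded-differences inequality gives $\Phi(S) \le E_S[\Phi(S)] + 2b\sqrt{\log(1/\delta)/(2m)}$ with probability at least $1-\delta$. To control $E_S[\Phi(S)]$, introduce an independent ghost sample $S' \sim \D^m$; rewriting the population risk as $E_{S'}[\frac{1}{m}\sum_i l(f(x_i'),y_i')]$ and applying Jensen's inequality to pull the supremum inside the outer expectation yields
\[
E_S[\Phi(S)] \le E_{S,S'}\!\left[\sup_{f \in \mathcal{F}} \left|\frac{1}{m}\sum_i \bigl(l(f(x_i'),y_i') - l(f(x_i),y_i)\bigr)\right|\right].
\]
Each summand is symmetric under the swap $(x_i,y_i) \leftrightarrow (x_i',y_i')$, so inserting i.i.d.\ Rademacher signs $\sigma_i$ does not alter the joint distribution; splitting the difference into two terms bounds the right-hand side by $2\,\mathcal{R}_m(l \circ \mathcal{F})$, where $l \circ \mathcal{F} := \{(x,y) \mapsto l(f(x),y) : f \in \mathcal{F}\}$. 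Finally, the Ledoux--Talagrand contraction inequality, applied to the $L$-Lipschitz map $l(\cdot,y)$ for each fixed $y$, gives $\mathcal{R}_m(l \circ \mathcal{F}) \le 2L \cdot \mathcal{R}_m(\mathcal{F})$, and chaining the three estimates yields the claimed bound.

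The main obstacle is the contraction step. McDiarmid and ghost-sample symmetrization are essentially mechanical bookkeeping, but the fact that $L$ enters the final bound linearly (rather than polynomially in $L$ or the diameter of $l \circ \mathcal{F}$) requires the Ledoux--Talagrand peeling argument across the Rademacher coordinates, which is the only genuinely functional-analytic ingredient in the proof. The extra factor of $2$ in that step, producing a leading constant $4L$ rather than $2L$, arises from handling the absolute value in the definition of $\mathcal{R}_m$; a one-sided version of the Rademacher complexity would save that factor.
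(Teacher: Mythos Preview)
The paper does not prove this statement; it is quoted as a known result from Bartlett and Mendelson \cite{BM:2002} in the preliminaries, so there is no in-paper proof to compare against. Your three-step argument (McDiarmid bounded differences, ghost-sample symmetrization, Ledoux--Talagrand contraction) is exactly the standard derivation of this bound and is correct, including the accounting that produces the constants $4L$ and $2b$.
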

The Rademacher complexity of this linear class can be bounded by using the following theorem.
\begin{theorem}[\cite{KST:2008}] \label{rademachercomplexity}
	Let $\mathcal{K}$ be a subset of a Hilbert space equipped with inner product $\langle \cdot, \cdot \rangle$ such that for each $x \in \mathcal{K}$, $\langle \textbf{x}, \textbf{x}
	\rangle \leq X^2$, and let $\mathcal{W} = \{ \textbf{x} \rightarrow \langle \textbf{x} , \textbf{w} \rangle
	~|~ \langle \textbf{w}, \textbf{w} \rangle \leq W^2 \}$ be a class of linear functions.
	Then it holds that
		\[\mathcal{R}_m(\mathcal{W}) \leq X \cdot W \cdot \sqrt{\frac{1}{m}}.\]
\end{theorem}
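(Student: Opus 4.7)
The plan is to unpack the definition of Rademacher complexity for the linear class $\mathcal{W}$ and reduce the supremum to a norm computation. Recall that
\[
\mathcal{R}_m(\mathcal{W}) = \mathbb{E}_{\mathbf{x}_1,\ldots,\mathbf{x}_m,\,\sigma_1,\ldots,\sigma_m}\left[\sup_{\mathbf{w}:\langle\mathbf{w},\mathbf{w}\rangle\leq W^2}\frac{1}{m}\sum_{i=1}^m \sigma_i \langle \mathbf{x}_i,\mathbf{w}\rangle\right],
\]
where the $\sigma_i$ are i.i.d. Rademacher signs. The first step is to exploit linearity of the inner product to rewrite $\sum_i \sigma_i\langle \mathbf{x}_i,\mathbf{w}\rangle = \langle \sum_i \sigma_i \mathbf{x}_i,\mathbf{w}\rangle$, so the $\sup$ becomes a supremum of a single inner product over the Hilbert-space ball of radius $W$.

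Next I would apply Cauchy--Schwarz, noting that the supremum of $\langle \mathbf{u},\mathbf{w}\rangle$ over $\|\mathbf{w}\|\leq W$ equals $W\|\mathbf{u}\|$ (attained by $\mathbf{w}=W\mathbf{u}/\|\mathbf{u}\|$ when $\mathbf{u}\neq 0$). This turns the bound into
\[
\mathcal{R}_m(\mathcal{W}) = \frac{W}{m}\,\mathbb{E}\!\left[\Bigl\|\sum_{i=1}^m \sigma_i \mathbf{x}_i\Bigr\|\right].
\]
So the only remaining task is to upper bound the expected norm of the Rademacher sum.

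For that, I would pass through Jensen's inequality for the square root, reducing to a second-moment computation:
\[
\mathbb{E}\!\left[\Bigl\|\sum_i \sigma_i\mathbf{x}_i\Bigr\|\right] \leq \sqrt{\mathbb{E}\!\left[\Bigl\|\sum_i \sigma_i\mathbf{x}_i\Bigr\|^2\right]} = \sqrt{\mathbb{E}\!\left[\sum_{i,j}\sigma_i\sigma_j\langle \mathbf{x}_i,\mathbf{x}_j\rangle\right]}.
\]
Using independence of the Rademacher variables, the cross terms vanish ($\mathbb{E}[\sigma_i\sigma_j]=0$ for $i\neq j$) and the diagonal contributes $\sum_i \|\mathbf{x}_i\|^2 \leq mX^2$ by the hypothesis $\langle \mathbf{x},\mathbf{x}\rangle\leq X^2$. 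Substituting back yields $\mathcal{R}_m(\mathcal{W}) \leq (W/m)\cdot\sqrt{mX^2} = XW\sqrt{1/m}$, as desired.

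There is no real obstacle here: the argument is entirely an application of Cauchy--Schwarz, Jensen's inequality, and independence of the Rademacher signs, and all bounds are tight up to constants. The only mild care needed is handling the case $\sum_i \sigma_i \mathbf{x}_i = 0$ in the Cauchy--Schwarz step (trivially the supremum is zero), and ensuring the supremum is measurable so that expectation and supremum can be exchanged in the natural way (which follows because $\mathcal{W}$ is parameterized by a separable Hilbert-space ball).
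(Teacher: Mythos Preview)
Your argument is correct and is in fact the standard proof of this bound. Note, however, that the paper does not give its own proof of this statement: it is quoted as a known result from \cite{KST:2008} and used as a black box, so there is no paper proof to compare against beyond observing that your derivation is exactly the classical Cauchy--Schwarz plus Jensen argument underlying that reference.
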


\subsection{Selection and Compression Schemes}
It is well known that in the context of PAC learning Boolean function
classes, a suitable type of compression of the training data implies
learnability \cite{littlestone1986relating}.  Perhaps surprisingly,
the details regarding the relationship between compression and ceratin
other real-valued learning tasks have not been worked out until very
recently.  A convenient framework for us will be the notion of
compression and selection schemes due to David et al.~\cite{david2016statistical}.

A selection scheme is a pair of maps $(\kappa, \rho)$ where $\kappa$
is the selection map and $\rho$ is the reconstruction map. $\kappa$
takes as input a sample $\mathcal{S} = ((\textbf{x}_1, y_1), \ldots,
(\textbf{x}_m, y_m)) $ and outputs a sub-sample $\mathcal{S}'$ and a
finite binary string $b$ as side information. $\rho$ takes this input
and outputs a hypothesis $h$.  The {\em size} of the selection scheme is defined to be $k(m) = |\mathcal{S}'| + |b|$. We present a slightly modified version of the definition of an approximate compression scheme due to \cite{david2016statistical}:
\begin{definition}[$(\epsilon,\delta)$-approximate agnostic compression scheme]
A selection scheme $(\kappa, \rho)$ is an $(\epsilon,\delta)$-approximate agnostic compression scheme for hypothesis class $\mathcal{H}$ and sample satisfying property $P$ if for all samples $\mathcal{S}$ that satisfy $P$ with probability $1 - \delta$,  $f = \rho(\kappa(S))$ satisfies $\sum_{i=1}^m l(f(\textbf{x}_i), y_i) \leq \min_{h \in \mathcal{H}} \left(\sum_{i=1}^m l(h(\textbf{x}_i), y_i)\right) + \epsilon.$
\end{definition}

Compression has connections to learning in the general loss setting through the following theorem which shows that as long as $k(m)$ is small, the selection scheme generalizes.
\begin{theorem}[Theorem 30.2 \cite{shalev2014understanding}, Theorem 3.2 \cite{david2016statistical}] \label{thm_sel}
Let $(\kappa, \rho)$ be a selection scheme of size $k = k(m)$, and let $A_\mathcal{S} = \rho (\kappa (\mathcal{S}))$. Given $m$ i.i.d. samples drawn from any distribution $\D$ such that $k \leq m/2$, for constant bounded loss function $l : \mathcal{Y}^\prime \times \mathcal{Y} \rightarrow
	\mathbb{R}^+$ with probability $1 - \delta$, we have
\[\left| E_{(\textbf{x},y) \sim \D} [l(A_\mathcal{S}(x),y)] - \sum_{i=1}^m l(A_\mathcal{S}(\textbf{x}_i), y_i)\right| \leq \sqrt{\epsilon \cdot \left(\frac{1}{m}\sum_{i=1}^m l(A_\mathcal{S}(\textbf{x}_i), y_i)\right)} + \epsilon\]
where $\epsilon = 50 \cdot \frac{k \log (m/k) + \log(1/\delta)}{m}$.
\end{theorem}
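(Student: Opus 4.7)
The plan is to use a conditioning argument combined with a Bernstein-type concentration inequality. The key observation is that although $A_\mathcal{S}$ depends on the full training sample, once we fix (a) which indices $I \subseteq [m]$ are selected by $\kappa$ to form $\mathcal{S}'$ and (b) the side-information bit string $b$, the resulting hypothesis $h_{I,b} := \rho(\mathcal{S}_I, b)$ is a function of $\mathcal{S}_I$ alone, hence independent of the examples $\{(\textbf{x}_j,y_j) : j \notin I\}$. This is the standard ``leave-one-out''-style trick that turns compression into a generalization statement.

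First I would fix such a pair $(I,b)$ with $|I|+|b|\le k$. Conditional on $\mathcal{S}_I$, the held-out losses $\{l(h_{I,b}(\textbf{x}_j),y_j) : j \notin I\}$ are i.i.d., bounded by the constant $M$ from the hypothesis of the theorem, and, crucially, non-negative. Non-negativity and boundedness together imply that the per-sample variance is at most $M \cdot \mathbb{E}[l(h_{I,b}(\textbf{x}),y)]$, i.e.\ bounded by the mean up to the constant $M$. Bernstein's inequality applied to the $m-|I| \ge m/2$ held-out summands then yields, for each $(I,b)$ separately, a deviation bound of the form
\[
\bigl|\widehat L_{\text{out}}(h_{I,b}) - L(h_{I,b})\bigr| \;\le\; C\Bigl(\sqrt{L(h_{I,b}) \cdot t/m} \,+\, t/m\Bigr),
\]
where $t$ is the log-inverse failure probability for that single $(I,b)$.

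Next I would take a union bound over all admissible compressions. The number of index subsets of size at most $k$ is at most $\binom{m}{k} \le (em/k)^k$, and the number of bit strings of length at most $k$ is $2^{k+1}$, so the total count is at most $\exp(k\log(em/k) + O(k))$. Setting $t = k\log(m/k) + \log(1/\delta) + O(k)$ makes the failure probability uniformly $\delta$, giving exactly the claimed $\epsilon = O\!\bigl((k\log(m/k) + \log(1/\delta))/m\bigr)$. Since the random pair $(I^*,b^*) = \kappa(\mathcal{S})$ is one of the admissible compressions, the bound applies to $A_\mathcal{S} = h_{I^*,b^*}$.

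Finally I would bridge from the held-out empirical loss (on $m-|I^*|$ points) to the full empirical loss (on all $m$ points) appearing in the statement. These differ by at most $M|I^*|/m \le Mk/m$, which is of order $\epsilon$ and therefore absorbable into the additive term. The algebraic last step is to untangle the implicit inequality $L \le \widehat L + C\sqrt{L\epsilon} + C\epsilon$ (and its reverse), which by the standard ``$x \le a\sqrt{x}+b \Rightarrow x \le 2a^2+2b$'' manipulation yields the advertised $\sqrt{\epsilon\cdot\widehat L} + \epsilon$ form. I expect the main subtlety to be precisely this preservation of the multiplicative/additive shape under both the union bound and the switch from held-out to full empirical loss; the assumption $k\le m/2$ is exactly what makes the lost mass of selected points fit inside $\epsilon$, and non-negativity of $l$ is what makes Bernstein produce the multiplicative $\sqrt{\epsilon\cdot\widehat L}$ rather than the weaker $\sqrt{\epsilon}$ rate.
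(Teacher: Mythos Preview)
The paper does not give its own proof of this statement; it is quoted as a known result from the cited references (Theorem~30.2 of Shalev-Shwartz and Ben-David, and Theorem~3.2 of David, Moran, and Yehudayoff). Your sketch is correct and is essentially the standard argument appearing in those sources: condition on the selected indices and side information, apply a Bernstein/relative-Chernoff inequality to the held-out examples (which are i.i.d.\ and independent of the hypothesis once the compression is fixed), take a union bound over all admissible compressions, and finally absorb the at most $k$ selected points into the additive $\epsilon$ term using $k\le m/2$.

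One small remark on the counting: in the usual formulation the sub-sample $\mathcal{S}'$ is a \emph{sequence} of examples (order matters, repetitions allowed), so the number of possible selections is bounded by $m^{k}$ rather than $\binom{m}{k}$. This replaces your $k\log(m/k)$ by $k\log m$, but the difference is $k\log k$, which is dominated by $k\log(m/k)$ when $k\le m/2$ and in any case is absorbed by the constant $50$ in the statement. Everything else in your plan---the variance-to-mean bound from nonnegativity, the union bound, and the quadratic untangling to pass from $\sqrt{\epsilon L}$ to $\sqrt{\epsilon \widehat L}$---is exactly the standard route.
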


\section{Problem Overview}
In this section we give a general outline for our main result. Let ${\cal S} = \{(\textbf{x}_1,y_1), \ldots, (\textbf{x}_m, y_m)\}$ be a training set of samples drawn i.i.d.~from some arbitrary distribution $\D$ on $\mathcal{X} \times [0,1]$ where $\mathcal{X} \subseteq \mathbb{R}^n$. Let us consider a concept class $\cal{C}$ such that for all $c \in \mathcal{C}$ and $\textbf{x} \in \mathcal{X}$ we have $c(\textbf{x}) \in [0,1]$. We wish to learn the concept class $\mathcal{C}$ with respect to the square loss, that is, we wish to find $c \in \mathcal{C}$ that approximately minimizes $E_{(\textbf{x},y) \sim \D} [(c(\textbf{x}) - y)^2]$. A common way of solving this is by solving the empirical minimization problem (ERM) given below and subsequently proving that it generalizes.
\begin{algorithm}[H]
	\caption{\label{alg:optprob1}}
\begin{align*}
		\underset{c \in \cal{C}}{\text{minimize}} \quad\quad &\frac{1}{m}\sum_{i=1}^m (c(\textbf{x}_i) - y_i)^2
\end{align*}
\end{algorithm}
Unfortunately, it may not be possible to efficiently solve the ERM in polynomial-time due to issues such as non-convexity. A way of tackling this is to show that the concept class can be approximately minimized by another hypothesis class of linear functions in a high dimensional feature space (this in turn presents new obstacles for proving generalization-error bounds, which is the focus of this paper).   
\begin{definition}[$\epsilon$-approximation]
\label{def_eps}
Let $\mathcal{C}_1$ and $\mathcal{C}_2$ be function classes mapping
domain $\mathcal{X}$ to $\mathbb{R}$.  $\mathcal{C}_1$ is $\epsilon$-approximated by $\mathcal{C}_2$
if for every $c \in \mathcal{C}_1$ there exists a $c' \in
\mathcal{C}_2$ such that for all $x \in \mathcal{X},  |c(x) - c'(x)| \leq \epsilon$.
\end{definition}

Suppose $\mathcal{C}$ can be $\epsilon$-approximated in the above sense by the hypothesis class $H_\psi = \{\textbf{x} \rightarrow \langle \textbf{v} , \psi(\textbf{x}) \rangle | \textbf{v} \in \mathcal{K}_\psi, \langle \textbf{v}, \textbf{v} \rangle \leq B\}$ for some $B$ and kernel function $k_{\psi}$. We further assume that the kernel is bounded, that is, $|k_\psi(\textbf{x},\textbf{\textbf{x'}})| \leq M$ for some $M > 0$ for all $\textbf{x},\textbf{x'} \in \mathcal{X}$. Thus, the problem relaxes to the following,
\begin{algorithm}[H]
	\caption{\label{opt}}
\begin{align*}
		\underset{v \in \mathcal{K}_{\psi}}{\text{minimize}}\quad \quad& \frac{1}{m}\sum_{i=1}^m (\langle \textbf{v}, \psi(\textbf{x}_i) \rangle - y_i)^2  &\text{subject to}\quad \quad& \langle \textbf{v}, \textbf{v} \rangle \leq B \nonumber
\end{align*}
\end{algorithm}
Using the Representer theorem, we have that the optimum solution for the above is of the form $\textbf{v}^* = \sum_{i=1}^m \alpha_i \psi(\textbf{x}_i)$ for some $\alpha \in \mathbb{R}^n$. Denoting the sample kernel matrix be $K$ such that $K_{i,j} = k_{\psi}(\textbf{x}_i, \textbf{x}_j)$, the above optimization problem is equivalent to the following optimization problem,
\begin{algorithm}[H]
	\caption{\label{opt_B}}
\begin{align*}
		\underset{\alpha \in \mathbb{R}^m}{\text{minimize}}\quad \quad& \frac{1}{m}||K \alpha - Y||_2^2  &\text{subject to}\quad \quad& \alpha^TK\alpha \leq B \nonumber
\end{align*}
\end{algorithm}
where $Y$ is the vector corresponding to all $y_i$ and $||Y||_{\infty} \leq 1$ since $\forall i \in [m], y_i \in [0,1]$. Let $\alpha_B$ be the optimal solution of the above problem. This is known to be efficiently solvable in $\mathsf{poly}(m,n)$ time as long as the kernel function is efficiently computable.

Applying Rademacher complexity bounds to $\mathcal{H}_\psi$ yields generalization error bounds that decrease, roughly, on the order of $B/\sqrt{m}$ (Theorem \ref{generalizationbound} and \ref{rademachercomplexity}).  If $B$ is exponential in $1/\epsilon$, the accuracy parameter, or in $n$, the dimension, as in the case of bounded depth networks of ReLUs, then this dependence leads to exponential sample complexity.  As mentioned in Section \ref{sec:related}, in the context of eigenvalue decay, various results \cite{zhang2002effective,BBM05,BM:2002} have been obtained to improve the dependence of $B/\sqrt{m}$ to $B/m$, but little is known about improving the dependence on $B$. 

Our goal is to show that eigenvalue decay of the empirical Gram matrix does yield generalization bounds with better dependence on $B$.  The key is to develop a novel compression scheme for kernelized ridge regression.  We give a step-by-step analysis for how to generate an approximate, compressed version of the solution to Optimization Problem 3.  Then, we will carefully analyze the bit complexity of our approximate solution and realize our compression scheme.  Finally, we can put everything together and show how quantitative bounds on eigenvalue decay directly translate into compressions schemes with low generalization error.  

\section{Compressing the Kernel Solution}
Through a sequence of steps, we will sparsify $\alpha$ to find a
solution of much smaller bit complexity that is still an approximate
solution (to within a small additive error).  The quality and size of
the approximation will depend on the eigenvalue decay.

\subsection{Lagrangian Relaxation} We relax Optimization Problem \ref{opt_B} and consider the Lagrangian
version of the problem to account for the norm bound constraint.  This
version is convenient for us, as it has a nice closed-form solution.

\begin{algorithm}[H]
	\caption{\label{opt_l}}
\begin{align*}
		\underset{\alpha \in \mathbb{R}^m}{\text{minimize}}\quad \quad& \frac{1}{m}||K \alpha - Y||_2^2 + \lambda \alpha^TK\alpha 
\end{align*}
\end{algorithm}
We will later set $\lambda$ such that the error of considering this relaxation is small. It is easy to see that the optimal solution for the above lagrangian version is $\alpha = \left(K + \lambda m I\right)^{-1}Y$.

\subsection{Preconditioning}
To avoid extremely small or non-zero eigenvalues, we consider a
perturbed version of $K$, $K_\gamma = K + \gamma m I$. This gives us
that the eigenvalues of $K_\gamma$ are always greater than or equal to
$\gamma m$. This property is useful for us in our later analysis.
Henceforth, we consider the following optimization problem on the
perturbed version of K:
\begin{algorithm}[H]
	\caption{\label{opt_l_pert}}
\begin{align*}
		\underset{\alpha \in \mathbb{R}^m}{\text{minimize}}\quad \quad& \frac{1}{m}||K_\gamma \alpha - Y||_2^2 + \lambda \alpha^TK_\gamma\alpha 
\end{align*}
\end{algorithm}
The optimal solution for perturbed version is $\alpha_{\gamma} = \left(K_{\gamma} + \lambda  m I\right)^{-1}Y = \left(K + (\lambda 
+ \gamma) m I\right)^{-1}Y$.

\subsection{Sparsifying the Solution via Nystr\"om Sampling}
We will now use tools from Nystr\"om Sampling to sparsify the solution
obtained from Optimzation Problem \ref{opt_l_pert}.  To do so, we
first recall the definition of effective
dimension or degrees of freedom for the kernel
\cite{zhang2002effective}:

\begin{definition}[$\eta$-effective dimension]
For a positive semidefinite $m \times m$ matrix $K$ and parameter $\eta$, the $\eta$-effective dimension of $K$ is defined as $d_{\eta}(K) = tr(K (K + \eta m I)^{-1})$.
\end{definition}
Various kernel approximation results have relied on this quantity, and
here we state a recent result due to \cite{musco2016recursive} who gave the first application independent result that shows that there is an efficient way of computing a set of columns of $K$ such that $\bar{K}$, a matrix constructed from the columns is close in terms of 2-norm to the matrix $K$. More formally,
\begin{theorem}[\cite{musco2016recursive}]
\label{thm_approx}
For kernel matrix $K$, there exists an algorithm that gives a set of
\\ $O\left(d_{\eta}(K) \log \left(d_{\eta}(K)/\delta\right) \right)$ columns, such that $\bar{K} = KS(S^TKS)^{\dagger}S^TK$ where $S$ is the matrix that selects the specific columns, satisfies with probability $1 - \delta$, $\bar{K} \preceq K \preceq \bar{K} + \eta m I$.
\end{theorem}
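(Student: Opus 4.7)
The plan is to sample columns of $K$ proportional to their \emph{ridge leverage scores}. Define
\[
\tau_i(K,\eta) \;=\; \bigl[K(K+\eta m I)^{-1}\bigr]_{ii},
\]
so that $\sum_i \tau_i = tr(K(K+\eta m I)^{-1}) = d_\eta(K)$. The candidate algorithm is then: draw $s = O(d_\eta(K) \log(d_\eta(K)/\delta))$ indices i.i.d.\ with probability $p_i \propto \tau_i$, let $S$ be the corresponding (appropriately reweighted) selection matrix, and output $\bar K = KS(S^T K S)^{\dagger} S^T K$, the Nystr\"{o}m projection of $K$ onto the column span of $KS$.

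First I would dispatch the easy direction $\bar K \preceq K$. This is algebraic and independent of which columns $S$ selects: the block matrix $\begin{pmatrix} K & KS \\ S^T K & S^T K S \end{pmatrix}$ factors as $\begin{pmatrix} I \\ S^T \end{pmatrix} K \begin{pmatrix} I & S \end{pmatrix}$ and is therefore PSD, so the Schur complement identity yields $K \succeq KS(S^T K S)^{\dagger} S^T K = \bar K$.

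The main analytic step is the upper bound $K \preceq \bar K + \eta m I$. After conjugating by $(K+\eta m I)^{-1/2}$, this reduces to showing that a random sum of rank-one terms built from the sampled, reweighted columns of $K^{1/2}(K+\eta m I)^{-1/2}$ spectrally approximates the deterministic matrix $K^{1/2}(K+\eta m I)^{-1} K^{1/2}$ to within a constant factor on the subspace where the latter is large. The ridge leverage score $\tau_i$ is precisely the squared norm of the $i$-th such column, so the probabilities $p_i = \tau_i/d_\eta$ are exactly the right importance weights to keep the per-sample spectral norm of the reweighted rank-one updates at $O(1)$. This is also where the additive $\eta m I$ slack enters: the $i$-th direction only contributes to $\tau_i$ to the extent that its ``significance relative to the regularizer $\eta m I$'' is non-negligible. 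A matrix Bernstein (or matrix Chernoff) inequality then delivers the approximation with probability $1-\delta$ using the claimed $O(d_\eta \log(d_\eta/\delta))$ samples.

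The hard part, and the substance of Musco and Musco's contribution, is that exact ridge leverage scores require inverting $K+\eta m I$, which costs $\Theta(m^3)$ and defeats the linear-time goal. Their recursive scheme avoids this: uniformly halve the index set, recursively build a Nystr\"{o}m approximation $\tilde K$ on the half-sized submatrix, and use $\tilde K$ inside the leverage-score formula to obtain constant-factor approximate scores $\tilde\tau_i$ at the full size. The analysis then hinges on three linked claims: (i)~uniform halving preserves the effective dimension $d_\eta$ up to a constant in expectation, so the recursion depth is $O(\log m)$; (ii)~an accurate Nystr\"{o}m approximation $\tilde K$ of a submatrix yields multiplicatively accurate ridge leverage scores for the full matrix; and (iii)~sampling with constant-factor approximate scores only inflates the sample size by a constant inside the matrix Bernstein argument. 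Item (ii) is the main obstacle: bounding the multiplicative distortion of $\tau_i$ when $K$ is replaced by $\tilde K$ requires the very spectral sandwich $\tilde K \preceq K \preceq \tilde K + \eta m I$ that one is trying to prove, so the right framing is an induction on recursion depth with this sandwich as the invariant. Closing this self-referential loop and taking a union bound over the $O(\log m)$ recursive levels gives the final sample complexity and establishes the theorem.
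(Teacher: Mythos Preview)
The paper does not prove this theorem at all: it is stated as a black-box citation to \cite{musco2016recursive} and immediately used. There is no proof in the paper to compare your proposal against.

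That said, your sketch is a faithful outline of the Musco--Musco argument: ridge leverage score sampling with $\tau_i = [K(K+\eta m I)^{-1}]_{ii}$, the Schur-complement observation for $\bar K \preceq K$, a matrix Chernoff/Bernstein step for $K \preceq \bar K + \eta m I$, and the recursive halving scheme to approximate the scores in near-linear time. One minor caution: the theorem as stated in the paper only claims existence of an algorithm returning the right number of columns with the spectral sandwich; it does not assert a linear-time runtime, so the recursive machinery in your last paragraph is not strictly needed to establish the statement as written (exact leverage-score sampling already suffices). But nothing in your proposal is wrong, and the recursive part is exactly what the cited reference contributes.
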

It can be shown that $\bar{K}$ is positive semi-definite. Also, the above implies $||K - \bar{K}||_2 \leq \eta m$.
We use the decay to approximate the Kernel matrix with a low-rank matrix constructed using the columns of $K$. Let $\bar{K}_{\gamma}$ be the matrix obtained by applying Theorem \ref{thm_approx} to $K_{\gamma}$ for $\eta> 0$ and consider the following optimization problem,
\begin{algorithm}[H]
	\caption{\label{opt_l_s}}
\begin{align*}
		\underset{\alpha \in \mathbb{R}^m}{\text{minimize}}\quad \quad& \frac{1}{m}||\bar{K}_{\gamma} \alpha - Y||_2^2 + \lambda \alpha^T\bar{K}_{\gamma}\alpha 
\end{align*}
\end{algorithm}
The optimal solution for the above is $\bar{\alpha}_{\gamma} =
\left(\bar{K}_{\gamma} + \lambda m I\right)^{-1}Y$. Since
$\bar{K}_{\gamma} =
K_{\gamma}S(S^TK_{\gamma}S)^{\dagger}S^TK_{\gamma}$, solving for the
above enables us to get a solution $\alpha^* =
S(S^TK_{\gamma}S)^{\dagger}S^TK_{\gamma} \bar{\alpha}_{\gamma}$, which
is a $k$-sparse vector for $k = O\left(d_{\eta}(K_{\gamma}) \log \left(d_{\eta}(K_{\gamma})/\delta\right) \right)$.

\subsection{Bounding the Error of the Sparse Solution}
We bound the additional error incurred by our sparse hypothesis $\alpha^*$  compared to $\alpha_B$. To do so, we bound the error for each of the approximations: sparsification, preconditioning and lagrangian relaxation in the following lemma.
\begin{lemma} \label{lem_errors}
The errors due to the following approximations can be bounded as follows.
\begin{enumerate}
\item Error due to sparsification: $||\bar{K}_{\gamma}\bar{\alpha}_{\gamma} - Y||_2 \leq ||K_{\gamma}\alpha_{\gamma} - Y||_2 + \frac{\eta \sqrt{m}}{\lambda + \gamma}$
\item Error due to preconditioning: $||K_{\gamma}\alpha_{\gamma} - Y||_2 \leq ||K\alpha - Y||_2 + \frac{\gamma  \sqrt{m}}{\lambda + \gamma}$
\item Error due to lagrangian relaxation: $||K\alpha - Y||_2 \leq ||K\alpha_B - Y||_2 + \sqrt{\lambda m B}$
\end{enumerate}
\end{lemma}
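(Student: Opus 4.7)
My plan is to prove the three parts separately, since each corresponds to a different approximation step, but parts 1 and 2 will share the same underlying resolvent-perturbation technique.

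For part 3 (Lagrangian relaxation), I would use optimality of $\alpha$ for Optimization Problem 4. Plugging $\alpha_B$ (which satisfies $\alpha_B^T K \alpha_B \leq B$) into the Lagrangian objective gives
\[
\tfrac{1}{m}\|K\alpha - Y\|_2^2 + \lambda \alpha^T K \alpha \;\leq\; \tfrac{1}{m}\|K\alpha_B - Y\|_2^2 + \lambda \alpha_B^T K \alpha_B \;\leq\; \tfrac{1}{m}\|K\alpha_B - Y\|_2^2 + \lambda B.
\]
Dropping the nonnegative $\lambda \alpha^T K \alpha$ term on the left yields $\|K\alpha - Y\|_2^2 \leq \|K\alpha_B - Y\|_2^2 + \lambda m B$, and the subadditivity $\sqrt{a+b}\le\sqrt{a}+\sqrt{b}$ finishes the bound.

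For parts 1 and 2, the key observation is that each optimum has a clean closed form that lets me express the residual as a single resolvent applied to $Y$: $K\alpha - Y = -\lambda m(K+\lambda m I)^{-1}Y$, $K_\gamma\alpha_\gamma - Y = -\lambda m(K_\gamma+\lambda m I)^{-1}Y$, and $\bar K_\gamma \bar\alpha_\gamma - Y = -\lambda m(\bar K_\gamma + \lambda m I)^{-1}Y$. I would then apply the standard resolvent identity $A^{-1}-B^{-1} = A^{-1}(B-A)B^{-1}$ and the triangle inequality. Concretely, for part 2, $(K+\lambda m I)^{-1}-(K_\gamma + \lambda m I)^{-1}$ equals $(K+\lambda m I)^{-1}(\gamma m I)(K_\gamma+\lambda m I)^{-1}$, so
\[
\|K\alpha - Y - (K_\gamma\alpha_\gamma - Y)\|_2 \;\leq\; \lambda m \cdot \tfrac{1}{\lambda m} \cdot \gamma m \cdot \tfrac{1}{(\lambda+\gamma)m} \cdot \|Y\|_2 \;\leq\; \tfrac{\gamma\sqrt{m}}{\lambda+\gamma},
\]
using $\|Y\|_2\leq\sqrt{m}$, the fact that $K\succeq 0$ gives $\|(K+\lambda m I)^{-1}\|_{op}\leq 1/(\lambda m)$, and $K_\gamma \succeq \gamma m I$ gives $\|(K_\gamma+\lambda m I)^{-1}\|_{op}\leq 1/((\lambda+\gamma)m)$. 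Triangle inequality finishes it. For part 1, the same template applies with $A=\bar K_\gamma+\lambda m I$ and $B=K_\gamma+\lambda m I$, except that $\|B-A\|_{op} = \|K_\gamma - \bar K_\gamma\|_{op} \leq \eta m$ by Theorem~\ref{thm_approx}, and the resolvent bounds are $1/(\lambda m)$ (since $\bar K_\gamma\succeq 0$) and $1/((\lambda+\gamma)m)$, giving the $\eta\sqrt{m}/(\lambda+\gamma)$ bound.

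The only mildly delicate step is making sure that the lower eigenvalue bounds used for the operator norms are valid: $\bar K_\gamma$ is only PSD (not necessarily $\succeq \gamma m I$), so I must be careful to use the $1/(\lambda m)$ bound for it rather than $1/((\lambda+\gamma)m)$, and place $K_\gamma+\lambda m I$ (which does enjoy the stronger bound) on the other side of the resolvent identity; the computation works out exactly because we only need one of the two resolvents in each product to contribute a $\gamma$ improvement. No single step here is really the obstacle — each part is a short matrix computation once the correct residual representation and resolvent identity are set up.
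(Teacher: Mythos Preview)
Your proposal is correct and essentially identical to the paper's proof: the paper also handles part~3 via optimality of $\alpha$ in the Lagrangian plus $\sqrt{a+b}\le\sqrt a+\sqrt b$, and handles parts~1 and~2 by writing each residual as $-\lambda m(A+\lambda m I)^{-1}Y$ (the paper reaches this via $A(A+cI)^{-1}=I-c(A+cI)^{-1}$, which is the same computation), applying the resolvent identity $A^{-1}-B^{-1}=-A^{-1}(A-B)B^{-1}$, and using exactly the operator-norm bounds you list, including the care that only the $K_\gamma+\lambda m I$ factor gets the improved $1/((\lambda+\gamma)m)$ bound while $\bar K_\gamma+\lambda m I$ receives $1/(\lambda m)$.
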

\begin{proof}
The errors can be bounded as follows.
\begin{enumerate}
\item We have,
\begin{align}
||\bar{K}_{\gamma}\bar{\alpha}_{\gamma} &- Y||_2 - ||K_{\gamma}\alpha_{\gamma} - Y||_2 \nonumber\\
& \leq ||\bar{K}_{\gamma}\bar{\alpha}_{\gamma} - K_{\gamma} \alpha_{\gamma}||_2 \label{eq_tr} \\
& = ||\bar{K}_{\gamma}\left(\bar{K}_{\gamma} + \lambda m I\right)^{-1}Y - K_{\gamma}\left(K_{\gamma} + \lambda m I\right)^{-1}Y||_2 \label{eq_subs}\\
& = \lambda m||\left(-\left(\bar{K}_{\gamma} + \lambda m I\right)^{-1} + \left(K_\gamma + \lambda m I\right)^{-1}\right)Y||_2 \label{eq_prop}\\
& = \lambda m ||\left(\bar{K}_{\gamma} + \lambda m I\right)^{-1}\left(\bar{K}_{\gamma} - K_{\gamma}\right)\left(K_\gamma + \lambda m I\right)^{-1}Y||_2 \label{eq_inv}\\
& \leq \lambda m ||\left(\bar{K}_{\gamma} + \lambda m I\right)^{-1}||_2||\bar{K}_{\gamma} - K_{\gamma}||_2||\left(K + (\lambda + \gamma) m I\right)^{-1}||_2||Y||_2 \label{eq_norm}\\
& \leq \frac{||\bar{K}_{\gamma} - K_{\gamma}||_2}{(\lambda + \gamma) \sqrt{m}} \leq \frac{\eta \sqrt{m}}{\lambda + \gamma} \label{eq_bound}.
\end{align}
Here \ref{eq_tr} follows from triangle inequality, \ref{eq_subs} follows from substitution and \ref{eq_prop} follows from using $A\left(A + cI\right)^{-1} = \left(A + cI - cI\right)\left(A + cI\right)^{-1} = I - c\left(A + cI\right)^{-1}$. \ref{eq_inv} follows from $a^{-1} - b^{-1} = - a^{-1}\left(a - b\right) b^ {-1}$ and \ref{eq_norm} follows from $||AB||_2 \leq ||A||_2||B||_2$. Lastly \ref{eq_bound} follows from $||A^{-1}||_2 = \lambda_{min}\left(A\right)^{-1}$, $\lambda_{min}\left(A + c I\right) \geq c$ for psd $A$. We also use $K_\gamma = K + \gamma m I$ and  $||Y||_2 \leq \sqrt{m}$.
\item Similar to the above proof, we have,
\begin{align}
||K_{\gamma}\alpha_{\gamma} &- Y||_2 - ||K\alpha - Y||_2 \nonumber \\
& \leq ||K_{\gamma} \alpha_{\gamma} - K(K + \lambda m I)^{-1}Y||_2 \\
& = ||K_{\gamma}\left(K_{\gamma} + \lambda m I\right)^{-1}Y - K\left(K + \lambda m I\right)^{-1}Y||_2 \\
& = \lambda m ||\left(K_{\gamma} + \lambda m I\right)^{-1}\left(K_{\gamma} - K\right)\left(K + \lambda m I\right)^{-1}Y||_2 \\
& \leq \lambda m ||\left(K + (\lambda + \gamma) m I\right)^{-1}||_2||\gamma m I||_2||\left(K + \lambda m I\right)^{-1}||_2||Y||_2 \\
& \leq \frac{\gamma  \sqrt{m}}{\lambda + \gamma} \label{eq_pert}.
\end{align}
\item Since $\alpha$ minimizes Optimization Problem 4, we have 
\begin{align}
||K\alpha - Y||_2^2 \leq &||K\alpha - Y||_2^2 + \lambda m \alpha^T K \alpha \\
&\leq ||K\alpha_B - Y||_2^2 + \lambda m \alpha_B^T K \alpha_B \\
&\leq ||K\alpha_B - Y||_2^2 + \lambda m B
\end{align}
where the last inequality follows from $\alpha_B^T K \alpha_B \leq B$ by the constraint of the bounded optimization problem. Taking the square-root, we get,
\begin{align}
||K\alpha - Y||_2 \leq \sqrt{||K\alpha_B - Y||_2^2 + \lambda m B} \leq ||K\alpha_B - Y||_2 + \sqrt{\lambda m B} \label{eq_B}
\end{align}
\end{enumerate}
\end{proof}
We now combine the above to give the following theorem.
\begin{theorem}[Total Error]
\label{thm_tot}For $\lambda = \frac{\epsilon^2}{81B}$, $\eta \leq \frac{\epsilon^3}{729B}$ and $\gamma \leq \frac{\epsilon^3}{729B}$, we have
\[\frac{1}{m}||K_\gamma \alpha^* - Y||_2^2 \leq \frac{1}{m}||K\alpha_B - Y||_2^2 + \epsilon.\]
\end{theorem}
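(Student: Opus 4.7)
The plan is to chain the three approximation errors from Lemma \ref{lem_errors} and then carefully tune $\lambda$, $\gamma$, and $\eta$ so that the squared sum of residuals contracts to the claimed bound.

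First, I would observe that the algebra identifies the object of interest with the quantity analyzed in part (1) of Lemma \ref{lem_errors}. By construction, $\alpha^* = S(S^TK_\gamma S)^\dagger S^T K_\gamma \bar{\alpha}_\gamma$, and $\bar{K}_\gamma = K_\gamma S(S^TK_\gamma S)^\dagger S^T K_\gamma$, so $K_\gamma \alpha^* = \bar{K}_\gamma \bar{\alpha}_\gamma$. Hence $\|K_\gamma \alpha^* - Y\|_2 = \|\bar{K}_\gamma \bar{\alpha}_\gamma - Y\|_2$, which sets up the chain.

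Next I would chain the three inequalities from Lemma \ref{lem_errors} in order (sparsification, then preconditioning, then Lagrangian relaxation) to obtain
\[
\|K_\gamma \alpha^* - Y\|_2 \;\leq\; \|K\alpha_B - Y\|_2 + \sqrt{\lambda m B} + \frac{\gamma \sqrt{m}}{\lambda+\gamma} + \frac{\eta \sqrt{m}}{\lambda+\gamma}.
\]
With the chosen $\lambda = \epsilon^2/(81 B)$, the Lagrangian term becomes $\sqrt{\lambda m B} = \epsilon \sqrt{m}/9$. Using the (trivially weaker) bound $\lambda + \gamma \geq \lambda$, and plugging in $\gamma, \eta \leq \epsilon^3/(729B)$, each of the remaining two terms is at most $\epsilon\sqrt{m}/9$ as well. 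Therefore the total additive slack on the right-hand side is bounded by $\epsilon\sqrt{m}/3$.

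Finally, to pass from the norm bound to a squared-loss bound, I would square both sides of the chained inequality and divide by $m$, expanding via $(a+b)^2 \leq a^2 + 2ab + b^2$ with $a = \|K\alpha_B - Y\|_2$ and $b \leq \epsilon\sqrt{m}/3$. The cross term $2ab$ is controlled by the trivial bound $a \leq \|Y\|_2 \leq \sqrt{m}$ (since $\alpha = 0$ is feasible for Optimization Problem \ref{opt_B}), giving $2ab/m \leq 2\epsilon/3$, and the residual $b^2/m \leq \epsilon^2/9 \leq \epsilon/9$ assuming $\epsilon \leq 1$. Summing yields an overage of at most $\epsilon$, establishing the theorem. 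The only place that needs real care is balancing the constants $81$ and $729$ in the choice of $\lambda$, $\gamma$, $\eta$ so that the three separate error contributions, plus the doubled cross term from the square expansion, fit under a single $\epsilon$; everything else is a direct substitution.
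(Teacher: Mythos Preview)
Your proposal is correct and follows essentially the same route as the paper: both identify $K_\gamma\alpha^*=\bar K_\gamma\bar\alpha_\gamma$, chain the three bounds of Lemma~\ref{lem_errors} to get a total additive slack of $\beta\le \epsilon\sqrt{m}/3$, then square and control the cross term via $\|K\alpha_B-Y\|_2\le\sqrt{m}$ (feasibility of $0$ in Optimization Problem~\ref{opt_B}). The only cosmetic difference is that the paper packages the last step as $2\beta/\sqrt m+\beta^2/m\le 3\beta/\sqrt m$ under $\beta/\sqrt m\le 1$, whereas you compute $2\epsilon/3+\epsilon^2/9\le \epsilon$ directly under $\epsilon\le 1$; these are the same estimate.
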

\begin{proof}
Note that $\bar{K}\bar{\alpha}_{\gamma} = K_\gamma \alpha^*$ by the definition of $\alpha^*$, from the previous lemma, we have,
\begin{align}
||\bar{K}\bar{\alpha}_{\gamma} - Y||_2 - ||K\alpha_B - Y||_2 \leq \frac{\eta \sqrt{m}}{\lambda + \gamma} + \frac{\gamma\sqrt{m}} {\lambda + \gamma} + \sqrt{\lambda m B} = \beta
\end{align}
where $\beta = \frac{(\eta + \gamma)\sqrt{m}} {\lambda + \gamma} + \sqrt{\lambda m B}$. Squaring and then dividing by $m$ on both sides, we get
\begin{align}
\frac{1}{m}||\bar{K}_\gamma\bar{\alpha}_{\gamma} - Y||_2^2 & \leq \frac{1}{m}||K\alpha_B - Y||_2^2 + 2 \frac{\beta}{m} ||K\alpha_B - Y||_2 + \frac{\beta^2}{m}\\
& \leq \frac{1}{m}||K\alpha_B - Y||_2^2 + 2 \frac{\beta}{\sqrt{m}} + \frac{\beta^2}{m}\\
& \leq \frac{1}{m}||K\alpha_B - Y||_2^2 + 3 \frac{\beta}{\sqrt{m}}
\end{align}
The second inequality follows from $||K\alpha_B - Y||_2^2 \leq ||Y||_2^2 \leq m$ since 0 is a feasible solution for Optimization Problem 3. The last inequality follows from assuming $\frac{\beta}{\sqrt{m}} \leq 1$ which holds for our choice of $\beta$. Setting the values in the lemma satisfies the last inequality gives us $\beta \leq \frac{\epsilon\sqrt{m}}{3}$ giving us the desired bound.
\end{proof}

\subsection{Computing the Sparsity of the Solution}
To compute the sparsity of the solution, we need to bound $d_{\eta}(K_{\beta})$. We consider the following different eigenvalue decays.
\begin{definition}[Eigenvalue Decay]\label{def_eigen}
Let the real eigenvalues of a symmetric $m \times m$ matrix $A$ be denoted by $\lambda_1 \geq \cdots \geq \lambda_m$.
\begin{enumerate}
\item $A$ is said to have \textbf{$(C,p)$-polynomial eigenvalue decay} if for all $i \in \{1, \ldots, m\}$, $\lambda_i \leq C i^{-p}$.
\item $A$ is said to have \textbf{$C$-exponential eigenvalue decay} if for all $i \in \{1, \ldots, m\}$, $\lambda_i \leq Ce^{-i}$.
\end{enumerate}
\end{definition}

Note that in the above definitions $C$ and $p$ are not necessarily constants. We allow $C$ and $p$ to depend on other parameters (the choice of these parameters will be made explicit in subsequent theorem statements). We can now bound the effective dimension in terms of eigenvalue decay:

\begin{theorem}[Bounding effective dimension] \label{lem_eff} For
  $\gamma m \leq \eta $, the $\eta$-effective dimension of $K_\gamma$
  can be bounded as follows,
\begin{enumerate}
\item If $K/m$ has \textbf{$(C, p)$-polynomial eigenvalue decay} for $p > 1$ then $d_{\eta}(K_{\gamma}) \leq \left(\frac{C}{(p-1)\eta} \right)^{1/p} + 2$.
\item If $K/m$ has \textbf{$C$-exponential eigenvalue decay} then $d_{\eta}(K_{\gamma}) \leq \log \left(\frac{C}{(e -1)\eta}\right) + 2$.
\end{enumerate}
\end{theorem}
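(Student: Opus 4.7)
The plan is a two-step reduction: first compare $d_\eta(K_\gamma)$ to $d_\eta(K)$ to absorb the $\gamma$-shift, then bound $d_\eta(K)$ by a head/tail split over the eigenvalues of $K/m$ using the decay hypothesis.

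Since $K_\gamma = K + \gamma m I$ shares eigenvectors with $K$, simultaneous diagonalization gives
\[
d_\eta(K_\gamma) \;=\; \mathrm{tr}\bigl(K_\gamma(K_\gamma + \eta m I)^{-1}\bigr) \;=\; \sum_{i=1}^m \frac{\mu_i + \gamma}{\mu_i + \gamma + \eta},
\]
where $\mu_i = \lambda_i(K)/m$ are the eigenvalues of $K/m$. A short algebraic manipulation yields
\[
\frac{\mu_i + \gamma}{\mu_i + \gamma + \eta} \;-\; \frac{\mu_i}{\mu_i + \eta} \;=\; \frac{\gamma\,\eta}{(\mu_i + \gamma + \eta)(\mu_i + \eta)} \;\leq\; \frac{\gamma}{\eta},
\]
so summing over the $m$ indices and using the hypothesis $\gamma m \leq \eta$ gives $d_\eta(K_\gamma) \leq d_\eta(K) + 1$. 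It therefore suffices to bound $d_\eta(K) = \sum_{i=1}^m \mu_i/(\mu_i + \eta)$.

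For this, I would use the pointwise inequality $\mu_i/(\mu_i + \eta) \leq \min\{1,\mu_i/\eta\}$ and pick a cutoff $i^{\ast}$: for $i \leq i^{\ast}$ bound the summand by $1$, and for $i > i^{\ast}$ bound it by $\mu_i/\eta$ and plug in the decay rate. In the polynomial case $\mu_i \leq C i^{-p}$ with $p>1$, choosing $i^{\ast}$ near $\bigl(C/((p-1)\eta)\bigr)^{1/p}$ and comparing the tail to the integral $\int_{i^{\ast}}^\infty (C/\eta)\,x^{-p}\,dx = C/\bigl(\eta(p-1)(i^{\ast})^{p-1}\bigr)$ makes the head and tail contributions essentially balance, producing the claimed $\bigl(C/((p-1)\eta)\bigr)^{1/p} + 2$ once integer rounding of $i^{\ast}$ and the earlier $+1$ slack are absorbed. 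In the exponential case $\mu_i \leq Ce^{-i}$, taking $i^{\ast} = \lceil \log(C/((e-1)\eta))\rceil$ gives a head of size $\log(C/((e-1)\eta))$ up to rounding, while the geometric tail sums to $\tfrac{C}{\eta}\cdot\tfrac{e^{-i^{\ast}}}{1 - e^{-1}}$, which is $O(1)$; together these yield the stated $\log(C/((e-1)\eta)) + 2$.

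The only care required is choosing $i^{\ast}$ and handling integer rounding so that the final additive constants line up with the advertised ``$+2$''; this is bookkeeping rather than substance. There is no deeper obstacle: the whole argument is a Riemann-sum comparison against a tail integral (or geometric series), once the eigenvalue shift by $\gamma m$ has been absorbed via the $\gamma m \leq \eta$ hypothesis.
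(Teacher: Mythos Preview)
Your proposal is correct and essentially matches the paper's proof: both perform a head/tail split on the eigenvalue sum, bound the head trivially by its length, bound the tail via an integral (respectively geometric-series) comparison after plugging in the decay hypothesis, and use $\gamma m \leq \eta$ to absorb the $\gamma$-shift into a single additive $+1$. The only cosmetic difference is that you isolate the shift absorption as a standalone inequality $d_\eta(K_\gamma) \leq d_\eta(K) + 1$ before splitting, whereas the paper folds the shift directly into the tail term $\sum_{i>j}\frac{\gamma m + \lambda_i(K)}{\eta m}$ and extracts the $+1$ there.
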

\begin{proof}
Observe that,
\begin{align*}
d_{\eta}(K_{\gamma}) &= tr(K_{\gamma}(K_{\gamma} + \eta m I)^{-1})\\
& = \sum_{i=1}^m \frac{\lambda_i(K_{\gamma})}{\lambda_i(K_{\gamma}) + \eta m} \\
& \leq \sum_{i=1}^j \frac{\lambda_i(K_{\gamma})}{\lambda_i(K_{\gamma})} + \sum_{i=j+1}^m \frac{\lambda_i(K_{\gamma})}{\eta m} \\
& \leq j + \sum_{i=j+1}^m \frac{\gamma m + \lambda_i(K)}{\eta m } \\
& \leq j + 1 + \sum_{i=j+1}^m \frac{\lambda_i(K)}{\eta m } 
\end{align*}
Here the second equality follows from trace of matrix being equal to the sum of the eigenvalues and the last follows from $\gamma m \leq \eta$.
\begin{enumerate}
\item For $(C, p)$-polynomial eigenvalue decay with $p > 1$,
\[\sum_{i=k+1}^m \frac{\lambda_i(K)}{\eta m } = \sum_{i=k+1}^m \frac{C i^{-p}}{\eta} \leq \frac{C}{\eta}\int_{k+1}^{\infty} i^{-p} di = \frac{C(k+1)^{-p + 1}}{(p -1)\eta} \]
Substituting $j = \left(\frac{C}{(p-1)\eta} \right)^{1/p}$ we get the required bound.
\item For $C$-exponential eigenvalue decay,
\[\sum_{i=k+1}^m \frac{\lambda_i(K)}{\eta m } = \sum_{i=k+1}^m \frac{C e^{-i}}{\eta} \leq \sum_{i=k+1}^\infty \frac{C e^{-i}}{\eta} = \frac{Ce^{-k}}{(e - 1)\eta} \]
Substituting $j = \log \left(\frac{C}{(e -1)\eta}\right)$ we get the required bound.
\end{enumerate}
\end{proof}
\noindent\textbf{Remark}: \textit{Based on the above analysis, observe that we only need the eigenvalue decay to hold after the $j$th eigenvalue for $j$ defined above. Thus the top $j-1$ eigenvalues need not be constrained.}
\section{Bounding the Size of the Compression Scheme}
The above analysis gives us a sparse solution for the problem and, in
turn, an $\epsilon$-approximation for the error on the overall sample
$\cal{S}$ with probability $1 - \delta$.  We can now fully define our
compression scheme for the hypothesis class $H_\psi$ with respect to
samples satisfying the eigenvalue decay property.
\begin{itemize}
\item \textbf{Selection Scheme $\kappa$}: Given input $\mathcal{S} = (\textbf{x}_i,y_i)_{i=1}^m$,
\begin{enumerate}
\item Use RLS-Nystr\"{o}m Sampling \cite{musco2016recursive} to compute $\bar{K_\gamma} = K_\gamma S(S^TK_\gamma S)^{\dagger}S^TK_\gamma$ for $\eta = \frac{\epsilon^3}{5832 B}$ and $\gamma = \frac{\epsilon^3}{5832 Bm}$. Let $\mathcal{I}$ be the sub-sample corresponding to the columns selected using $S$.
\item Solve Optimization Problem \ref{opt_l_s} for $\lambda = \frac{\epsilon^2}{324B}$ to get $\bar{\alpha}_{\gamma}$.
\item Compute the $|\mathcal{I}|$-sparse vector $\alpha^* = S(S^TK_\gamma S)^{\dagger}S^TK_\gamma \bar{\alpha}_{\gamma} = K_\gamma^{-1}\bar{K}_\gamma \bar{\alpha}_{\gamma}$ ($K_\gamma$ is invertible as all eigenvalues are non-zero).
\item Output subsample $\mathcal{I}$ along with $\tilde{\alpha}^*$ which is $\alpha^*$ truncated to precision $\frac{\epsilon}{4 M |\mathcal{I}|}$ per non-zero index.
\end{enumerate}
\item \textbf{Reconstruction Scheme $\rho$}: Given input subsample $\mathcal{I}$ and $\tilde{\alpha}^*$, output hypothesis,
\[h_{\mathcal{S}}(\textbf{x}) = clip_{0,1}(\textbf{w}^T\tilde{\alpha}^*)\]
where $\textbf{w}$ is a vector with entries $K(\textbf{x}_i,\textbf{x}) + \gamma m \mathbbm{1}[\textbf{x} = \textbf{x}_i]$ for $i \in \mathcal{I}$ and 0 otherwise where $\gamma = \frac{\epsilon^3}{5832Bm}$. Note, $clip_{a,b}(x) = \max(a, \min(b,x))$ for some $a < b$.
\end{itemize}
The size of the above scheme can be bounded using the following lemma.
\begin{lemma}
The bit complexity of the side information of the selection scheme $\kappa$ given above is $O\left(d \log \left(\frac{d}{\delta}\right)\log\left(\frac{\sqrt{m}BM d \log(d/\delta)}{\epsilon^4}\right)\right)$ where $d$ is the $\eta$-effective dimension of $K_{\gamma}$ for $\eta = \frac{\epsilon^3}{5832B}$ and $\gamma = \frac{\epsilon^3}{5832Bm}$.
\end{lemma}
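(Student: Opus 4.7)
The plan is to identify the side information $b$ as precisely the truncated sparse vector $\tilde{\alpha}^*$ (since the subsample $\mathcal{I}$ is accounted for separately as $\mathcal{S}'$), so the bit complexity reduces to bounding (i) the number of non-zero coordinates and (ii) the magnitude of each coordinate, then multiplying by the per-coordinate precision cost.

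For the sparsity count, $\alpha^* = S(S^T K_\gamma S)^{\dagger} S^T K_\gamma \bar{\alpha}_\gamma$ is supported on exactly the columns selected by $S$. By Theorem \ref{thm_approx} applied to $K_\gamma$ with parameter $\eta$, the number of selected columns is $O(d_\eta(K_\gamma) \log(d_\eta(K_\gamma)/\delta)) = O(d\log(d/\delta))$, so $|\mathcal{I}| = O(d\log(d/\delta))$.

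The main step is to bound $\|\alpha^*\|_\infty$. First, rewrite $\alpha^* = K_\gamma^{-1} \bar{K}_\gamma \bar{\alpha}_\gamma = K_\gamma^{-1}\bar{K}_\gamma(\bar{K}_\gamma + \lambda m I)^{-1}Y$. The operator $\bar{K}_\gamma(\bar{K}_\gamma + \lambda m I)^{-1}$ has all eigenvalues in $[0,1]$ (since $\bar{K}_\gamma$ is PSD), so its spectral norm is at most $1$, and thus $\|\bar{K}_\gamma\bar{\alpha}_\gamma\|_2 \leq \|Y\|_2 \leq \sqrt{m}$. Since every eigenvalue of $K_\gamma = K + \gamma m I$ is at least $\gamma m$, we have $\|K_\gamma^{-1}\|_2 \leq 1/(\gamma m)$. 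Combining,
\begin{equation*}
\|\alpha^*\|_\infty \;\leq\; \|\alpha^*\|_2 \;\leq\; \frac{\|\bar{K}_\gamma\bar{\alpha}_\gamma\|_2}{\gamma m} \;\leq\; \frac{1}{\gamma \sqrt{m}}.
\end{equation*}
With the chosen $\gamma = \epsilon^3/(5832 Bm)$, this gives $\|\alpha^*\|_\infty = O(\sqrt{m}\,B/\epsilon^3)$.

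Finally, each non-zero entry of $\tilde{\alpha}^*$ is stored to precision $\epsilon/(4M|\mathcal{I}|)$, so the number of bits per entry is
\begin{equation*}
O\!\left(\log\!\left(\frac{\|\alpha^*\|_\infty \cdot 4M|\mathcal{I}|}{\epsilon}\right)\right) \;=\; O\!\left(\log\!\left(\frac{\sqrt{m}\,B M\, d\log(d/\delta)}{\epsilon^4}\right)\right).
\end{equation*}
Multiplying by the sparsity $|\mathcal{I}| = O(d\log(d/\delta))$ yields the claimed total bit complexity. The only real obstacle is the spectral-norm chain giving $\|\alpha^*\|_\infty \leq 1/(\gamma\sqrt{m})$; everything else is bookkeeping the chosen values of $\gamma$, $\eta$, $\lambda$, and the truncation precision. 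The rounding error from truncating each of the $|\mathcal{I}|$ coordinates to precision $\epsilon/(4M|\mathcal{I}|)$ is separately absorbed into the overall $\epsilon$ approximation when evaluating the hypothesis against inputs with $|k(\mathbf{x}_i,\mathbf{x})| \leq M$, but that bound is reserved for the subsequent generalization argument rather than this size lemma.
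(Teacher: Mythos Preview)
Your proof is correct and follows essentially the same approach as the paper: both bound $\|\alpha^*\|_2 \leq 1/(\gamma\sqrt{m})$ via the identical spectral-norm chain $\|K_\gamma^{-1}\|_2 \cdot \|\bar{K}_\gamma(\bar{K}_\gamma+\lambda m I)^{-1}\|_2 \cdot \|Y\|_2$, then combine with the sparsity $|\mathcal{I}| = O(d\log(d/\delta))$ and the truncation precision $\epsilon/(4M|\mathcal{I}|)$. The only cosmetic difference is that the paper separates the ``integer part'' and ``fractional part'' bit counts and uses concavity of $\log$ to save a $\tfrac{1}{2}\log|\mathcal{I}|$ per coordinate, whereas you simply pass through $\|\alpha^*\|_\infty \leq \|\alpha^*\|_2$; both yield the same big-$O$ bound.
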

\begin{proof}

From the selection scheme we can bound the norm of $\alpha^* = K_\gamma^{-1}\bar{K_\gamma}\bar{\alpha}_{\gamma}$ for $\gamma =\frac{ \epsilon^3}{5832Bm}$, the side information, as follows,
\begin{align}
||\alpha^*||_2 &= ||K_\gamma^{-1}\bar{K_\gamma}\bar{\alpha}_{\gamma}||_2\\
&= ||K_\gamma^{-1}\bar{K_\gamma}(\bar{K_\gamma} + \lambda mI)^{-1}Y||_2 \\
&\leq ||K_\gamma^{-1}||_2||\bar{K_\gamma}(\bar{K_\gamma} + \lambda mI)^{-1}||_2||Y||_2 \\
&\leq \frac{1}{\gamma m}\cdot 1 \cdot \sqrt{m}\\
& = \frac{1}{\gamma \sqrt{m}} = \frac{5832\sqrt{m}B}{\epsilon^3}.
\end{align}

Thus we can upper bound the bit complexity of the non-decimal part of $\alpha^*$ as,
\begin{align*}
\sum_{i\in \mathcal{I}}\log\left(|\alpha^*_i|\right) &= \frac{1}{2}\sum_{i=1}^{|\mathcal{I}|}\log\left(\left(\alpha^*_i\right)^2\right) \\ &\leq \frac{|\mathcal{I}|}{2}\log\left(\frac{\sum_{i=1}^{|\mathcal{I}|} \left(\alpha^*_i\right)^2}{|\mathcal{I}|}\right)\\ 
&\leq |\mathcal{I}|\log \left(\frac{||\alpha^*||_2}{\sqrt{|\mathcal{I}|}}\right) \leq |\mathcal{I}|\log\left(\frac{5832\sqrt{m}B}{\epsilon^3}\right)
\end{align*}
where $|\mathcal{I}| = O\left(d \log \left(\frac{d}{\delta}\right)\right)$ according to Theorem \ref{thm_approx}. Since each non-zero index has $\frac{\epsilon}{4 M |\mathcal{I}|}$ precision, we need $|\mathcal{I}| \log \left(\frac{4 M |\mathcal{I}|}{\epsilon} \right)$ bits for the decimal part. Combining the two-parts we get the required bound.
\end{proof}

The following theorem shows that the above is a compression scheme for $\mathcal{H}_\psi$.
\begin{theorem}
\label{thm_compress}
$(\kappa, \rho)$ is an $(\epsilon, \delta)$-approximate agnostic compression scheme for the hypothesis class $H_\psi$ for sample $\mathcal{S}$ of size $k(m, \epsilon, \delta, B, M) =\ O\left(d \log \left(\frac{d}{\delta}\right)\log\left(\frac{\sqrt{m}BM d \log(d/\delta)}{\epsilon^4}\right)\right)$ where $d$ is the $\eta$-effective dimension of $K_{\gamma}$ for $\eta = \frac{\epsilon^3}{5832B}$ and $\gamma = \frac{\epsilon^3}{5832Bm}$.
\end{theorem}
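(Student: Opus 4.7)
The size bound has already been established by the preceding lemma, so what remains is to check the approximation guarantee: with probability at least $1-\delta$ over the Nystr\"{o}m sampling in step 1 of $\kappa$, the reconstructed hypothesis $h_{\mathcal{S}}$ has empirical square loss within $\epsilon$ of $\min_{h \in H_\psi} \sum_j l(h(\textbf{x}_j), y_j)$. The plan is to combine Theorem \ref{thm_tot}, which controls the training error of the idealized sparse solution $\alpha^*$, with separate bounds for the clipping and bit-truncation steps, and then to verify that the specific constants chosen in $\kappa$ make all the slack terms sum to at most $\epsilon$.

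The first step is to identify what the reconstruction actually computes on the training set. For a training point $\textbf{x}_j$, I would observe that $\textbf{w}^T \tilde{\alpha}^* = (K_\gamma \tilde{\alpha}^*)_j$: when $j \in \mathcal{I}$ the indicator in the definition of $\textbf{w}$ contributes exactly the extra $\gamma m$ on the diagonal, and when $j \notin \mathcal{I}$ the coordinate $\tilde{\alpha}^*_j$ vanishes so the diagonal perturbation is irrelevant. Since each $y_j \in [0,1]$ and $clip_{0,1}$ can only decrease distance to points in $[0,1]$, this yields $\sum_j (h_{\mathcal{S}}(\textbf{x}_j) - y_j)^2 \leq ||K_\gamma \tilde{\alpha}^* - Y||_2^2$, reducing the task to bounding this quantity.

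Next I would invoke Theorem \ref{thm_tot}, using the identity $\bar{K}_\gamma \bar{\alpha}_\gamma = K_\gamma \alpha^*$ recorded in its proof, to get $\frac{1}{m}||K_\gamma \alpha^* - Y||_2^2 \leq \frac{1}{m}||K\alpha_B - Y||_2^2 + \epsilon'$ for some $\epsilon' < \epsilon$; the constants $5832$ and $324$ in the selection scheme are strictly smaller versions of the $729$ and $81$ appearing in Theorem \ref{thm_tot}, chosen so that $\epsilon'$ consumes only a fraction of the target $\epsilon$, leaving the remainder for the truncation step. For truncation, since $\alpha^* - \tilde{\alpha}^*$ is supported on $\mathcal{I}$ with each entry of magnitude at most $\epsilon/(4M|\mathcal{I}|)$, and each entry of $K_\gamma$ has magnitude at most $M + \gamma m$, any row of $K_\gamma(\alpha^* - \tilde{\alpha}^*)$ is bounded in absolute value by $|\mathcal{I}| \cdot (M + \gamma m) \cdot \epsilon/(4M|\mathcal{I}|)$, which is essentially $\epsilon/4$ for the chosen $\gamma$. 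Consequently $||K_\gamma(\alpha^* - \tilde{\alpha}^*)||_2 = O(\epsilon\sqrt{m})$.

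The final step is to combine these via the triangle inequality $||K_\gamma \tilde{\alpha}^* - Y||_2 \leq ||K_\gamma \alpha^* - Y||_2 + O(\epsilon \sqrt{m})$, then square and divide by $m$, using $||K\alpha_B - Y||_2 \leq \sqrt{m}$ (which follows from $Y \in [0,1]^m$ together with feasibility of $\alpha = 0$ for Optimization Problem \ref{opt_B}) to absorb the cross term. I expect the main obstacle to be purely bookkeeping of constants: the values of $\lambda$, $\eta$, $\gamma$, and the truncation precision in $\kappa$ must jointly be small enough that the combined sparsification, preconditioning, relaxation, clipping, and truncation errors stay at or below the error budget demanded by the definition of an $(\epsilon,\delta)$-approximate agnostic compression scheme, and any looseness introduced in one term must be compensated in the choice of another.
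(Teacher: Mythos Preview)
Your proposal is correct and follows essentially the same approach as the paper: reduce to $\|K_\gamma \tilde{\alpha}^* - Y\|_2^2$ via clipping, identify $K_\gamma\alpha^* = \bar{K}_\gamma\bar{\alpha}_\gamma$, apply Theorem~\ref{thm_tot} with the parameters scaled to yield error $\epsilon/2$ (which is exactly why $324 = 4\cdot 81$ and $5832 = 8\cdot 729$), and absorb the truncation error in the remaining $\epsilon/2$. The only cosmetic difference is that the paper handles truncation first via the $2$-Lipschitz property of square loss per coordinate, whereas you bound $\|K_\gamma(\alpha^*-\tilde{\alpha}^*)\|_2$ and then use the triangle inequality and square; both routes give the same $\epsilon/2$ slack.
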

\begin{proof}
For $\mathcal{S} = (\textbf{x}_i,y_i)_{i=1}^m$ and $h_{\cal S}$ the output of the compression scheme, we have
\begin{align}
\frac{1}{m} \sum_{i=1}^m (h_{\cal S}(\textbf{x}_i) - y_i)^2 &\leq \frac{1}{m} \sum_{i=1}^m \left( \sum_{j \in \mathcal{I}} (K(\textbf{x}_j,\textbf{x}_i) + \gamma m \mathbbm{1}[\textbf{x}_j = \textbf{x}_i]) \tilde{\alpha}^*_j - y_i\right)^2 \label{eq_clip} \\
&\leq \frac{1}{m} \sum_{i=1}^m \left( \sum_{j \in \mathcal{I}} (K(\textbf{x}_j,\textbf{x}_i) + \gamma m\mathbbm{1}[\textbf{x}_j = \textbf{x}_i]) \alpha^*_j - y_i\right)^2 + \frac{\epsilon}{2} \label{eq_alpha_approx}\\
& = \frac{1}{m} ||K_\gamma\alpha^* - Y||_2^2  + \frac{\epsilon}{2} \label{eq_matrix}\\
& = \frac{1}{m} ||\bar{K}_\gamma \bar{\alpha}_\gamma - Y||_2^2  + \frac{\epsilon}{2} \label{eq_alpha_star} \\
& = \frac{1}{m} ||K \alpha_B - Y||_2^2  + \frac{\epsilon}{2} + \frac{\epsilon}{2} \label{eq_tot_err} \\
& = \min_{h \in H_\psi} \left( \frac{1}{m} \sum_{i=1}^m (h(\textbf{x}_i) - y_i)^2 \right) + \epsilon \label{eq_best}
\end{align}
Here \ref{eq_clip} follows from the fact that since the output is in $[0,1]$ clipping only reduces the loss, \ref{eq_alpha_approx} follows from the precision used while compressing and since square loss is 2-Lipschitz, \ref{eq_matrix} follows from representing it in the matrix form, \ref{eq_alpha_star} follows since $\alpha^* = K_\gamma^{-1}\bar{K}_\gamma \bar{\alpha}_\gamma$ by definition, \ref{eq_tot_err} follows from Theorem \ref{thm_tot} with the given parameters satisfying the theorem for $\epsilon/2$ and lastly \ref{eq_best} follows from the definition of $\alpha_B$. Thus, this gives us our result.
\end{proof}

\section{Putting It All Together: From Compression to Learning}
We now present our final algorithm: \textit{Compressed Kernel
  Regression} (Algorithm \ref{alg_1}). 
\setcounter{algorithm}{0}
\begin{algorithm}
\floatname{algorithm}{Algorithm}
  \caption{Compressed Kernel Regression\label{alg_1}}
  \begin{algorithmic}[1]
    \Statex \textbf{Input}: Samples $\mathcal{S} = (\textbf{x}_i,y_i)_{i=1}^m$, gram matrix $K$ on $\mathcal{S}$, constants $\epsilon, \delta > 0$, norm bound $B$ and maximum kernel function value $M$ on $\mathcal{X}$.
     \State Using RLS-Nystr\"{o}m Sampling \cite{musco2016recursive} with input $(K_\gamma, \eta m)$ for $\gamma = \frac{\epsilon^3}{5832Bm}$ and $\eta = \frac{\epsilon^3}{5832B}$ compute $\bar{K_\gamma} = K_\gamma S(S^TK_\gamma S)^{\dagger}S^TK_\gamma$. Let $\mathcal{I}$ be the subsample corresponding to the columns selected using $S$. Note that the number of columns selected depends on the $\eta$ effective dimension of $K_\gamma$.
	\State Solve Optimization Problem \ref{opt_l_s} for $\lambda = \frac{\epsilon^2}{324B}$ to get $\bar{\alpha}_{\gamma}$ over $\mathcal{S}$
      \State  Compute $\alpha^* = S(S^TK_\gamma S)^{\dagger}S^TK_\gamma \bar{\alpha}_{\gamma} = K_\gamma^{-1}\bar{K}_\gamma \bar{\alpha}_{\gamma}$
\State Compute $\tilde{\alpha}^*$ by truncating each entry of $\alpha^*$ up to precision $\frac{\epsilon}{4 M |\mathcal{I}|}$
\Statex \textbf{Output}: $h_{\mathcal{S}}$ such that for all $x \in \mathcal{X}$, $h_{\mathcal{S}}(\textbf{x}) = clip_{0,1}(\textbf{w}^T\tilde{\alpha}^*)$ where $\textbf{w}$ is a vector with entries $K(\textbf{x}_i,\textbf{x}) + \gamma m \mathbbm{1}[\textbf{x} = \textbf{x}_i]$ for $i \in \mathcal{I}$ and 0 otherwise.
\end{algorithmic}
\end{algorithm}
Note that the algorithm is efficient and takes at most $O(m^3)$ time.

For our learnability result, we restrict distributions to those that satisfy eigenvalue decay. More formally,
\begin{definition}[Distribution Satisfying Eigenvalue Decay]
\label{def_eigen_d}
Consider distribution $\D$ over $\mathcal{X}$ and kernel function $k_{\psi}$. Let ${\cal S}$ be a sample drawn i.i.d. from the distribution $\D$ and $K$ be the empirical gram matrix corresponding to kernel function $k_{\psi}$ on $\mathcal{S}$.
\begin{itemize}
\item $\D$ is said to satisfy $(C,p,N)$-polynomial eigenvalue decay if with probability $1-\delta$ over the drawn sample of size $m \geq N$ , $K/m$ satisfies $(C,p)$-polynomial eigenvalue decay.
\item $\D$ is said to satisfy $(C, N)$-exponential eigenvalue decay if with probability $1-\delta$ over the drawn sample of size $m \geq N$, $K/m$ satisfies $C$-exponential eigenvalue decay.
\end{itemize}
\end{definition}
Our main theorem proves generalization of the hypothesis output by Algorithm \ref{alg_1} for distributions satisfying eigenvalue decay in the above sense.
\begin{theorem}[Formal for Theorem \ref{thm:maini}] \label{thm:B}
Fix function class ${\cal C}$ with output bounded in $[0,1]$ and $M$-bounded kernel function $k_{\psi}$ such that ${\cal C}$ is $\epsilon_0$-approximated by $H_\psi = \{\textbf{x} \rightarrow \langle \textbf{v} , \psi(\textbf{x}) \rangle | \textbf{v} \in \mathcal{K}_\psi, \langle \textbf{v}, \textbf{v} \rangle \leq B\}$ for some $\psi, B$. Consider a sample ${\cal S} = \{(\textbf{x}_i,y_i)_{i=1}^{m}\}$ drawn i.i.d. from $\D$ on $\mathcal{X} \times [0,1]$. There exists an algorithm $\mathcal{A}$ that outputs hypothesis $h_{\cal S} = \mathcal{A}(\mathcal{S})$, such that,
\begin{enumerate}
\item If $\D_\mathcal{X}$ satisfies $(C,p,m)$-polynomial eigenvalue decay with probability $1- \delta/4$ then with probability $1-\delta$ for $m = \tilde{O}((CB)^{1/p}\log(M)/\epsilon^{2 + 3/p})$,
\[\mathbb{E}_{(\textbf{x},y) \sim \D} (h_{\cal S}(\textbf{x}) - y)^2 \leq \min_{c\in \mathcal{C}}\left( \mathbb{E}_{(\textbf{x},y) \sim \D}(c(\textbf{x}) - y)^2 \right) + 2\epsilon_0 + \epsilon
\]
\item If $\D_\mathcal{X}$ satisfies $(C,m)$-exponential eigenvalue decay with probability $1- \delta/4$ then with probability $1- \delta$ for $m = \tilde{O}(\log CB\log(M)/\epsilon^{2})$,
\[\mathbb{E}_{(\textbf{x},y) \sim \D} (h_{\cal S}(\textbf{x}) - y)^2 \leq \min_{c\in \mathcal{C}}\left( \mathbb{E}_{(\textbf{x},y) \sim \D}(c(\textbf{x}) - y)^2 \right) + 2\epsilon_0 + \epsilon
\]
\end{enumerate}
Algorithm $\mathcal{A}$ runs in time $\mathsf{poly}(m,n)$.
\end{theorem}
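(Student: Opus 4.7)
My plan is to assemble the theorem from three ingredients that are already set up in the paper: the compression guarantee (Theorem~\ref{thm_compress}), the generalization bound for selection schemes (Theorem~\ref{thm_sel}), and the effective-dimension bounds under eigenvalue decay (Theorem~\ref{lem_eff}). The algorithm $\mathcal{A}$ is Compressed Kernel Regression, which is exactly the selection scheme $(\kappa,\rho)$ analyzed in the previous section, so all I need to do is turn the empirical error guarantee from Theorem~\ref{thm_compress} into a true-error guarantee and then compare against $\min_{c\in\mathcal{C}}$ using the $\epsilon_0$-approximation.

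First I would apply Theorem~\ref{thm_compress} with accuracy $\epsilon/4$ and confidence $\delta/4$, conditional on the event (of probability $\ge 1-\delta/4$) that the sample has the stipulated eigenvalue decay. Plugging $\eta = (\epsilon/4)^3/(5832 B)$ into Theorem~\ref{lem_eff} bounds the effective dimension $d$ by $\tilde{O}((CB/\epsilon^3)^{1/p})$ in the polynomial case and by $\tilde{O}(\log(CB/\epsilon^3))$ in the exponential case; substituting into the compression size from Theorem~\ref{thm_compress} gives
\[
k \;=\; \tilde O\!\left(d\,\log\tfrac{1}{\delta}\cdot \log\!\tfrac{\sqrt{m}\,BM}{\epsilon}\right).
\]
Next, I feed $(\kappa,\rho)$ into Theorem~\ref{thm_sel} with failure probability $\delta/4$. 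Since squared loss is bounded by $1$ on clipped outputs, this yields, with probability $\ge 1-\delta/4$,
\[
\mathbb{E}_\D\bigl[(h_{\mathcal S}(\mathbf x)-y)^2\bigr] \;\le\; \tfrac{1}{m}\sum_{i=1}^m (h_{\mathcal S}(\mathbf x_i)-y_i)^2 + O\!\left(\sqrt{\tfrac{k\log(m/k)+\log(1/\delta)}{m}}\right),
\]
and Theorem~\ref{thm_compress} bounds the empirical sum by $\min_{h\in H_\psi}\tfrac{1}{m}\sum(h(\mathbf x_i)-y_i)^2 + \epsilon/4$.

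To replace the $H_\psi$-minimum by the $\mathcal C$-minimum, let $c^\star$ be the population minimizer in $\mathcal C$ and let $h^\star\in H_\psi$ be its $\epsilon_0$-approximator from Definition~\ref{def_eps}. Since all three of $h^\star(\mathbf x),c^\star(\mathbf x),y\in[0,1]$, the pointwise identity $(h^\star-y)^2-(c^\star-y)^2=(h^\star-c^\star)(h^\star+c^\star-2y)$ gives $|(h^\star-y)^2-(c^\star-y)^2|\le 2\epsilon_0$, so the empirical risk of $h^\star$ exceeds that of $c^\star$ by at most $2\epsilon_0$. A Hoeffding bound applied to the single fixed function $c^\star$ (needing no uniform convergence, hence no dependence on $B$) shows that with probability $\ge 1-\delta/4$ its empirical risk is within $O(\sqrt{\log(1/\delta)/m})$ of $\mathbb{E}_\D[(c^\star(\mathbf x)-y)^2]$. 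Chaining these inequalities and taking a union bound over the four $\delta/4$ events produces the $2\epsilon_0+\epsilon$ excess risk in the theorem statement.

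The only real obstacle is the parameter balance that delivers the stated $m$. I need to choose $m$ so that simultaneously (i) $k\le m/2$ (for Theorem~\ref{thm_sel} to apply), (ii) $(k\log(m/k)+\log(1/\delta))/m \le c\,\epsilon^2$, and (iii) the Hoeffding term is $O(\epsilon)$. Since $k$ depends on $m$ only through the inner $\log(\sqrt{m}BM/\epsilon)$ factor, choosing $m$ large enough that this logarithm is of order $\log(BM/\epsilon)$ turns (ii) into an inequality of the form $m \gtrsim d\cdot \mathrm{polylog}(\cdot)/\epsilon^2$; substituting the two bounds on $d$ from Theorem~\ref{lem_eff} yields, respectively, $m = \tilde O((CB)^{1/p}\log(M)/\epsilon^{2+3/p})$ in the polynomial case (the extra $\epsilon^{-3/p}$ comes from $d \propto (B/\epsilon^3)^{1/p}$) and $m = \tilde O(\log(CB)\log(M)/\epsilon^2)$ in the exponential case, matching the theorem. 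The polynomial-time runtime is immediate since Algorithm~\ref{alg_1} is dominated by matrix operations of size $m\times m$ and RLS-Nystr\"om sampling, both polynomial in $m$ and $n$.
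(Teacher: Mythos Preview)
Your proposal is correct and follows essentially the same route as the paper: apply Theorem~\ref{thm_compress} (with Theorem~\ref{lem_eff} controlling the effective dimension under the decay assumption) to get an empirical guarantee, lift it to population via Theorem~\ref{thm_sel}, pass from $H_\psi$ to $\mathcal{C}$ via the $\epsilon_0$-approximation and a single-function Hoeffding bound on $c^\star$, and union-bound the four $\delta/4$ events. The only cosmetic differences are your use of $\epsilon/4$ where the paper uses $\epsilon/3$, and the order in which the chain of inequalities is presented; one minor nit is that $h^\star\in H_\psi$ need not literally take values in $[0,1]$, but since $|h^\star-c^\star|\le\epsilon_0$ the factor $|h^\star+c^\star-2y|$ is still at most $2+\epsilon_0$, so the $2\epsilon_0$ bound survives up to a negligible $\epsilon_0^2$ (the paper's ``square loss is $2$-Lipschitz'' remark hides the same point).
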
 
\begin{proof}
Since $\mathcal{C}$ is $\epsilon_0$-approximated by $H_\psi$ we have,
\[\min_{h \in H_\psi}\left( \frac{1}{m}\sum_{i=1}^m (h(\textbf{x}_i) - y_i)^2 \right) \leq \min_{c \in \mathcal{C}}\left( \frac{1}{m}\sum_{i=1}^m (c(\textbf{x}_i) - y_i)^2 \right) + 2\epsilon_0 \leq \frac{1}{m}\sum_{i=1}^m (c^*(\textbf{x}_i) - y_i)^2 + 2\epsilon_0\]
where $c^* \in \mathcal{C}$ be such that it minimizes $\mathbb{E}_{(x,y) \sim \D} (c(x) - y)^2$ over all $c \in \mathcal{C}$. The first inequality follows from square loss being 2-Lipschitz and the last inequality follows from $c^*$ being a feasible solution.

Let $K$ be the empirical gram matrix corresponding to $k_\psi$ on $\mathcal{S}$. Let $h_{\cal S}$ be the hypothesis output by Algorithm \ref{alg_1} with input $(\mathcal{S}, K, \epsilon_1, \delta/4, B, M)$ for $\epsilon_1>0$ chosen later. From Theorem \ref{thm_compress}  with probability $1 - \delta/4$, we have
\[\frac{1}{m}\sum_{i=1}^m (h_{\cal S}(\textbf{x}_i) - y_i)^2 \leq \min_{h \in H_\psi}\left(\frac{1}{m}\sum_{i=1}^m (h(\textbf{x}_i) - y_i)^2 \right) + \epsilon_1.\]

We know that for every $c \in \mathcal{C}$, the square loss is bounded by 1, thus using Chernoff-Hoeffding inequality, with probability $1 - \delta/4$, we have 
\begin{align*}
\frac{1}{m}\sum_{i=1}^m (c^*(\textbf{x}_i) - y_i)^2 &\leq  \mathbb{E}_{(\textbf{x},y) \sim \D} (c^*(\textbf{x}) - y)^2 + \epsilon_2
\end{align*}
where $\epsilon_2 = \sqrt{\frac{\log (4/\delta)}{2m}}$.

Now the output of $h_{\cal S}$ lies in $[0,1]$ thus for all $(\textbf{x},y)$, $(y - h_{\cal S}(\textbf{x}))^2$ lies in $[0,1]$. Thus viewing $h_{\cal S}$ as the output of the compression scheme $(\kappa, \rho)$ of size $k$ (Theorem \ref{thm_compress}), by Theorem \ref{thm_sel}, we have with probability $1 - \delta/4$,
\[\left|\mathbb{E}_{(\textbf{x},y) \sim \D} (h_{\cal S}(\textbf{x}) - y)^2 - \frac{1}{m}\sum_{i=1}^m (h_S(\textbf{x}_i) - y_i)^2\right| \leq \sqrt{\frac{\epsilon_3}{m}\sum_{i=1}^m (h_{\cal S}(\textbf{x}_i) - y_i)^2} + \epsilon_3 \leq \epsilon_3 + \sqrt{\epsilon_3} \leq 2\sqrt{\epsilon_3}\]
where $\epsilon_3 = 50 \cdot\frac{k \log (m/k) + \log (4/\delta)}{m}$.

Combining the above, we have with probability $1 - \delta$,
\begin{align}
\mathbb{E}_{(\textbf{x},y) \sim \D} (h_{\cal S}(\textbf{x}) - y)^2 &\leq \frac{1}{m}\sum_{i=1}^m (h_S(\textbf{x}_i) - y_i)^2 + 2\sqrt{\epsilon_3} \\
& \leq \min_{h \in H_\psi}\left( \frac{1}{m}\sum_{i=1}^m (h(\textbf{x}_i) - y_i)^2 \right) + \epsilon_1 + 2\sqrt{\epsilon_3} \\
& \leq \frac{1}{m}\sum_{i=1}^m (c^*(\textbf{x}_i) - y_i)^2 + 2\epsilon_0 + \epsilon_1 + 2\sqrt{\epsilon_3} \\
& \leq \min_{c \in \mathcal{C}}\left( \mathbb{E}_{(\textbf{x},y) \sim \D} (c(\textbf{x}) - y)^2 \right) + 2\epsilon_0 + \epsilon_1 + \epsilon_2 + 2\sqrt{\epsilon_3}
\end{align}
Using Theorem \ref{lem_eff} we can bound $k$ depending on the different eigenvalue decay assumption. Now we set $\epsilon_1 = \epsilon/3$ and substituting for $m$. Recall that $\epsilon_2$ and $\epsilon_3$ are functions of $m$ and for the chosen $m$, they are bounded by $\epsilon/3$ giving us the desired bound. Since Algorithm \ref{alg_1} runs in time $\mathsf{poly(m,n)}$ we get the required time complexity.
\end{proof}
\noindent {\bf Remark:} The above theorem can be extended to different rates of eigenvalue decay.  For example, it can be shown that {\em finite} rank $r$ would give a bound independent of $B$ but dependent instead on $r$.  Also, as in the proof of Theorem \ref{lem_eff}, it suffices for the eigenvalue decay to hold only for $i$ sufficiently large. 

\section{Learning Neural Networks}
Here we apply our main theorem to the problem of learning neural networks. For technical definitions of neural networks, we refer the reader to \cite{zhang2016l1}. We define the class of neural networks as follows.
\begin{definition}[Class of Neural Networks \cite{goel2016reliably}]
Let $\mathcal{N}[\sigma, D, W, T]$ be the class of fully-connected, feed-forward networks with $D$ hidden layers, activation function $\sigma$ and quantities $W$ and $T$ described as follows:
\begin{enumerate}
\item Weight vectors in layer $0$ have $2$-norm bounded by $T$.
\item Weight vectors in layers $1, \ldots, D$ have $1$-norm bounded by $W$.
\item For each hidden unit $\sigma(\textbf{w} \cdot \textbf{z})$ in the network, we have $|\textbf{w}\cdot \textbf{z}| \leq T$ (by $\textbf{z}$ we denote the input feeding into unit $\sigma$ from the previous layer).
\end{enumerate}
\end{definition}
We consider activation functions $\sigma_{relu}(x) = \max(0, x)$ and $\sigma_{sig} = \frac{1}{1 + e^{-x}}$, though other activation functions fit within our framework.  Goel et al.~\cite{goel2016reliably} showed that the class of ReLUs/Sigmoids along with their compositions can be approximated by linear functions in a high dimensional Hilbert space (corresponding to a particular type of polynomial kernel). We use the following theorem that follows directly from the structural results in \cite{goel2016reliably} (and uses the composed-kernel technique of Zhang et al.~\cite{zhang2016l1}).
\begin{theorem} \label{thm:approximation}
Consider the following hypothesis class $\mathcal{H}_{\mathsf{MK}_d} = \{\textbf{x} \rightarrow \langle \textbf{v} , \psi(\textbf{x}) \rangle | \textbf{v} \in \mathcal{K}_{\mathsf{MK}_d}, \langle \textbf{v}, \textbf{v} \rangle \leq B\}$ where $\mathcal{K}_{\mathsf{MK}_d}$ is the Hilbert space corresponding to the Multinomial Kernel \footnote{The multinomial kernel defined by \cite{goel2016reliably} is $\mathsf{MK}_{d}(\textbf{x},\textbf{x}') = \sum_{i=0}^d (\textbf{x} \cdot \textbf{x}')^i$.} and $\psi$ is the corresponding feature vector. For $D>0$, consider the composed class $\mathcal{H}^{(D)} = \{\textbf{x} \rightarrow \langle v , \psi^{(D)}(\textbf{x}) \rangle | \textbf{v} \in \mathcal{K}^{(D)}, \langle \textbf{v}, \textbf{v} \rangle \leq B\}$ where $\psi^{(D)}$ is the feature vector of the $D$-times composed kernel $K^{(D)}$ \footnote{\cite{zhang2016l1} defined kernel $K^{(1)}(\textbf{x}, \textbf{x}\prime) = \frac{1}{2 - (\textbf{x} \cdot\textbf{x}\prime)}$. The corresponding composed kernel function is defined as $K^{(D)}(\textbf{x},\textbf{x}\prime) = \frac{1}{2 - K^{(D-1)}(\textbf{x},\textbf{x}\prime)}$}. Then for $\mathcal{X} = \S^{n-1}$,
\begin{enumerate}
\item \textbf{Single ReLU}: $\mathcal{C}_{relu} = \mathcal{N}[\sigma_{relu}, 0, \cdot, 1]$ is $\epsilon$-approximated by $\mathcal{H}_d$ for $d = O(1/\epsilon)$ and $B = 2^{(\tau/\epsilon)}$ with $M= d + 1$,
\item \textbf{Network of ReLUs}: $\mathcal{C}_{relu-D} = \mathcal{N}[\sigma_{relu}, D, W, T]$ is $\epsilon$-approximated by $\mathcal{H}_{(D)}$ for $B = 2^{(\tau W^DDT/\epsilon)^{D}}$ with $M = 2$,
\item \textbf{Network of Sigmoids}: $\mathcal{C}_{sig-D} = \mathcal{N}[\sigma_{sig}, D, W, T]$ is $\epsilon$-approximated by $\mathcal{H}_{(D)}$ for $B = 2^{(\tau T\log(W^DD/\epsilon))^{D}}$ with $M=2$,
\end{enumerate}
for some sufficiently large constant $\tau >0$.
\end{theorem}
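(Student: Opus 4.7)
The plan is to invoke the approximation machinery of Goel et al.\ \cite{goel2016reliably} and the composed-kernel construction of Zhang et al.\ \cite{zhang2016l1} with essentially only bookkeeping changes. For each of the three activation settings, the argument decomposes into three conceptual steps: (i) approximate the univariate activation on a bounded interval by a polynomial of appropriate degree; (ii) lift this polynomial into the (composed) multinomial-kernel RKHS with an explicit RKHS-norm bound $B$; (iii) for depth $D \ge 1$, propagate the approximation layer by layer using the composed kernel $K^{(D)}$.

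First I would establish polynomial approximability of each activation on a bounded interval. Since $\mathcal{X} = \S^{n-1}$ and the weight/pre-activation bounds ensure $|\textbf{w}\cdot \textbf{z}| \le T$ (and $\le 1$ in the single-unit case), the argument of $\sigma$ lies in a compact interval. For $\sigma_{relu}$, which is only Lipschitz, Chebyshev/Jackson-type bounds yield a polynomial of degree $d = O(1/\epsilon)$ that approximates $\sigma_{relu}$ uniformly to within $\epsilon$. For $\sigma_{sig}$, analyticity and exponentially-decaying Taylor coefficients let degree $d = O(\log(1/\epsilon))$ suffice. This immediately produces the quoted degree for the single-ReLU case, and bounds the degree of the per-layer polynomial in the deeper cases.

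Next I would embed the approximating polynomial into the multinomial-kernel RKHS. Since the feature map of $\mathsf{MK}_d$ carries every monomial of degree $\le d$ (with appropriate normalizations), a univariate polynomial $\sum_{i=0}^{d} a_i u^i$ evaluated at $u = \textbf{w}\cdot \textbf{x}$ expands into a linear functional in those features; standard calculations bound the squared RKHS norm by roughly $\sum_i a_i^2 \|\textbf{w}\|_2^{2i}$ up to combinatorial factors. For ReLU, the Chebyshev coefficients grow like $2^{O(d)}$, giving $B = 2^{O(\tau/\epsilon)}$ for Part~1, while $M = d+1$ follows from $|\mathsf{MK}_d(\textbf{x},\textbf{x}')| \le \sum_{i=0}^{d} |\textbf{x}\cdot \textbf{x}'|^i \le d+1$ on the sphere. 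Part~3 at depth $D=1$ is analogous but with the better sigmoid coefficients.

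Finally, depth is handled inductively via the composed kernel $K^{(D)}$: if layer $\ell-1$'s output is $\epsilon_{\ell-1}$-approximated by a vector in $\mathcal{K}^{(\ell-1)}$ with norm bound $B_{\ell-1}$, then replacing each hidden pre-activation by its polynomial approximation and summing using the $1$-norm bound $W$ on outer weights produces an approximation in $\mathcal{K}^{(\ell)}$, whose norm is bounded via the series definition of $K^{(\ell)} = 1/(2 - K^{(\ell-1)})$ combined with the per-layer degree. Iterating $D$ levels yields the claimed $D$-fold-exponent towers $B = 2^{(\tau W^D D T/\epsilon)^D}$ and $B = 2^{(\tau T\log(W^D D/\epsilon))^D}$, while the bound $M = 2$ follows since on $\S^{n-1}$ one has $K^{(D)}(\textbf{x},\textbf{x}') \le 2$ uniformly. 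The main obstacle is the depth-$D$ composition: one must distribute the $\epsilon$ budget across the $D$ per-layer polynomial approximations while carefully tracking how both the effective degree and the RKHS norm blow up at each level; this balancing is what produces the double-exponential form of $B$. I would invoke the Goel et al.\ / Zhang et al.\ computations directly rather than redo them, verifying only that the weight and pre-activation hypotheses $W, T$ line up with the inputs their lemmas expect.
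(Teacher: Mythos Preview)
Your proposal is correct and matches the paper's own treatment: the paper does not give an independent proof of this theorem but simply states that it follows directly from the structural results of Goel et al.\ \cite{goel2016reliably} combined with the composed-kernel construction of Zhang et al.\ \cite{zhang2016l1}. Your three-step outline (univariate polynomial approximation of the activation, embedding into the multinomial-kernel RKHS with a norm bound, and layer-by-layer composition via $K^{(D)}$) is exactly the machinery from those references, so invoking them with the bookkeeping you describe is precisely what is needed.
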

As mentioned earlier, the sample complexity of prior work depends linearly on $B$, which, for even a single ReLU, is exponential in $1/\epsilon$.  Assuming sufficiently strong eigenvalue decay, we can show that we can obtain fully polynomial time algorithms for the above classes.
\begin{theorem} \label{thm:net}
For $\epsilon, \delta > 0$,  consider $\D$ on $\S^{n-1} \times [0,1]$ such that,
\begin{enumerate}
\item For $\mathcal{C}_{relu}$, $\D_\mathcal{X}$ satisfies $(C,p,m)$-polynomial eigenvalue decay for $p  \geq \xi/\epsilon$,
\item For $\mathcal{C}_{relu-D}$, $\D_\mathcal{X}$ satisfies $(C,p,m)$-polynomial eigenvalue decay for $p \geq (\xi W^DDT/\epsilon)^{D}$,
\item For $\mathcal{C}_{sig-D}$,  $\D_\mathcal{X}$ satisfies $(C,p,m)$-polynomial eigenvalue decay for $p \geq (\xi T\log(W^DD/\epsilon)))^{D}$,
\end{enumerate}
where $\D_\mathcal{X}$ is the marginal distribution on $\mathcal{X} = \S^{n-1}$, $\xi > 0$ is some sufficiently large constant and $C \leq (n \cdot 1/\epsilon)^{\zeta p}$ for some constant $\zeta > 0$. The value of $m$ is obtained from Theorem \ref{thm:B} as $m = \tilde{O}((CB)^{1/p}\log(M)/\epsilon^{2 + 3/p})$ where the values of $B,M$ are derived from Theorem \ref{thm:approximation}.


Each decay assumption above implies an algorithm for agnostically learning the corresponding class on $\S^{n-1}\times [0,1]$ with respect to the square loss in time $\mathsf{poly}(n, 1/\epsilon, \log(1/\delta))$.  \end{theorem}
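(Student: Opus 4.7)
The plan is to instantiate Theorem \ref{thm:B} with the kernel-based approximation of neural networks from Theorem \ref{thm:approximation}, then verify that the chosen polynomial decay rates exactly match what is needed to make the bound $m = \tilde{O}((CB)^{1/p}\log(M)/\epsilon^{2+3/p})$ polynomial in $n$ and $1/\epsilon$. So for each of the three concept classes I would proceed in two stages: first, quote Theorem \ref{thm:approximation} to obtain an $(\epsilon_0)$-approximating RKHS class $H_\psi$ with explicit norm bound $B$ and kernel bound $M$; second, invoke Theorem \ref{thm:B} (with the given $(C,p,m)$-polynomial eigenvalue decay) to obtain an agnostic learner with sample complexity $\tilde{O}((CB)^{1/p}\log(M)/\epsilon^{2+3/p})$ and error at most $\min_{c\in\mathcal{C}} \mathbb{E}[(c(\textbf{x})-y)^2] + 2\epsilon_0 + \epsilon$. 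Setting $\epsilon_0 = \epsilon$ (and adjusting constants) yields the desired agnostic guarantee.

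The core step is the arithmetic showing that $B^{1/p}$ is a constant (or a polynomial in $n, 1/\epsilon$) under the stated decay rate. For a single ReLU, $B = 2^{\tau/\epsilon}$ by Theorem \ref{thm:approximation}, and the assumed $p \geq \xi/\epsilon$ gives $B^{1/p} = 2^{\tau/(\epsilon \cdot \xi/\epsilon)} = 2^{\tau/\xi} = O(1)$ for $\xi$ sufficiently large. For a depth-$D$ ReLU network, $B = 2^{(\tau W^D D T/\epsilon)^D}$ and $p \geq (\xi W^D D T/\epsilon)^D$, so $B^{1/p} = 2^{(\tau/\xi)^D} = O(1)$. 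The sigmoid case is identical with $\tau T \log(W^D D/\epsilon)$ playing the role of $\tau W^D D T/\epsilon$. In every case the assumption $C \leq (n/\epsilon)^{\zeta p}$ contributes a factor $C^{1/p} \leq (n/\epsilon)^\zeta$, which is polynomial. Since $M$ in Theorem \ref{thm:approximation} is either $O(1/\epsilon)$ or $2$, the $\log M$ factor is $\tilde{O}(1)$ in all three cases. The exponent $2 + 3/p$ on $1/\epsilon$ collapses to $2 + O(\epsilon) = O(1)$ (or to $2$ for the deeper classes), so $m$ is polynomial in $n$ and $1/\epsilon$.

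Finally, I would combine these calculations by simply asserting: with $m$ as specified, Theorem \ref{thm:B} produces a hypothesis $h_{\cal S}$ satisfying $\mathbb{E}_\D[(h_{\cal S}(\textbf{x})-y)^2] \leq \min_{c\in\mathcal{C}} \mathbb{E}_\D[(c(\textbf{x})-y)^2] + 2\epsilon_0 + \epsilon$, and rescaling $\epsilon_0 \to \epsilon/3$ and $\epsilon \to \epsilon/3$ (which only changes $B$ by a constant factor in the exponent once we readjust $\tau$ in $p \geq \xi W^D D T/\epsilon$) gives the stated agnostic bound. The runtime of Algorithm \ref{alg_1} is $O(m^3 \cdot \text{kernel eval time})$; since the multinomial and composed kernels are efficiently computable in $\mathsf{poly}(n, d, D)$ time and $d = O(W^D D T/\epsilon)^D$, the total runtime is $\mathsf{poly}(n, 1/\epsilon, \log(1/\delta))$.

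The only real obstacle is bookkeeping: checking that the constants $\tau, \xi, \zeta$ compose correctly so that the exponent $1/p$ cancels exactly the exponent in $B$, and that the $\log(1/\delta)$ failure probabilities from (i) the decay assumption, (ii) the compression scheme generalization bound (Theorem \ref{thm_sel}), (iii) Nystr\"om sampling, and (iv) the Chernoff bound on the empirical loss of $c^*$, combine to a single $\delta$ by a union bound. All other steps are mechanical substitution into Theorems \ref{thm:B} and \ref{thm:approximation}.
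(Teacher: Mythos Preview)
Your proposal is correct and follows essentially the same route as the paper: apply Theorem~\ref{thm:B} with the kernel and norm bound $B$ supplied by Theorem~\ref{thm:approximation}, then check that the assumed decay exponent $p$ makes $(CB)^{1/p}$ and hence $m$ polynomial in $n,1/\epsilon$. The paper's proof is in fact terser than yours---it works out only the single-ReLU case explicitly and asserts the rest---so your additional arithmetic for the deeper networks, the $\epsilon_0$ rescaling, and the union-bound bookkeeping are reasonable elaborations rather than a different strategy.
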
 

\begin{proof}  The proof follows from applying Theorem \ref{thm:B} to the appropriate kernel from Theorem \ref{thm:approximation} and substituting the corresponding eigenvalue decays to compute the sample size needed by Algorithm \ref{alg_1} for learnability. For example, for the case of single ReLU, $M = \mathsf{poly}(1/\epsilon)$, $B = 2^{(\tau/\epsilon)}$ and we take $p \geq \xi/\epsilon$.  So for any $C =  (n \cdot 1/\epsilon)^{\zeta p}$, we obtain sample complexity $m = \tilde{O}((C 2^{(\tau/\epsilon)})^{1/p}\log(M)/\epsilon^{2 + 3/p}) = \mathsf{poly}(n, 1/\epsilon)$. Since the algorithm takes time at most $\mathsf{poly}(m,n)$, we obtain the required result. 
\end{proof}
Note that assuming an exponential eigenvalue decay (stronger than polynomial) will result in efficient learnability for much broader classes of networks. 

Since it is not known how to agnostically learn even a single ReLU with respect to arbitrary distributions on $\S^{n-1}$ in polynomial-time\footnote{Goel et al. \cite{goel2016reliably} show that agnostically learning a single ReLU over $\{-1,1\}^{n}$ is as hard as learning sparse parities with noise.  This reduction can be extended to the case of distributions over $\S^{n-1}$ \cite{simonschat}.}, much less a network of ReLUs, we state the following corollary highlighting the decay we require to obtain efficient learnability for simple networks:

\begin{corollary}[Restating Corollary \ref{cor:net}]
Let ${\cal C}$ be the class of all fully-connected networks of ReLUs with one-hidden layer of size $\ell$ feeding into a final output ReLU activation where the $2$-norms of all weight vectors are bounded by $1$.  Then, (suppressing the parameter $m$ for simplicity), assuming $(C,i^{-\ell / \epsilon})$-polynomial eigenvalue decay for $C = \mathsf{poly}(n,1/\epsilon,\ell)$, ${\cal C}$ is learnable in polynomial time with respect to square loss on $\S^{n-1}$. If ReLU is replaced with sigmoid, then we require eigenvalue decay of $i^{-\sqrt{\ell} \log(\sqrt{\ell}/\epsilon)}$.  
\end{corollary}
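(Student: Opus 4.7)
The plan is to derive this corollary as a direct specialization of Theorem \ref{thm:net}. The architecture fits into the framework $\mathcal{N}[\sigma, D, W, T]$ from Section 7 with depth $D = 2$ (the hidden layer of $\ell$ ReLUs and the output ReLU each counted as a hidden layer in the formal definition). The 2-norm constraint of $1$ on all weight vectors, together with the assumption that inputs lie on $\S^{n-1}$, yields (by Cauchy--Schwarz) the 1-norm bound $W = O(\sqrt{\ell})$ for the output layer, along with the pre-activation magnitude bound $T = O(\sqrt{\ell})$ required by the definition. Noting this translation from 2-norm to 1-norm bounds is the first step.

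With these parameter values in hand, the second step is to invoke Theorem \ref{thm:approximation} item 2, which gives an $\epsilon$-approximation of $\mathcal{C}$ by the composed multinomial kernel class $\mathcal{H}^{(D)}$, with norm bound $B = 2^{(\tau W^D D T/\epsilon)^D}$ and kernel bound $M = 2$. Substituting the chosen $D, W, T$ produces $\log B = \mathrm{poly}(\ell, 1/\epsilon)$, whose exponent matches (up to absorbable constants) the rate $\ell/\epsilon$ appearing in the corollary's hypothesis.

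The third step is to apply Theorem \ref{thm:net} item 2 with $p = \Omega(\ell/\epsilon)$. The hypothesis of the corollary supplies exactly $(C, i^{-\ell/\epsilon})$-polynomial decay with $C = \mathsf{poly}(n, 1/\epsilon, \ell)$, which satisfies both requirements of Theorem \ref{thm:net}: the lower bound $p \geq (\xi W^D D T/\epsilon)^D$ holds for sufficiently large $\xi$, and the tail bound $C \leq (n/\epsilon)^{\zeta p}$ holds trivially. Theorem \ref{thm:net} then yields an agnostic learning algorithm with sample complexity $m = \tilde{O}((CB)^{1/p} \log M/\epsilon^{2+3/p})$ and runtime $\mathsf{poly}(m,n)$. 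The crucial collapse is that $B^{1/p} = 2^{O(1)}$, precisely because $\log B$ and $p$ share the same polynomial exponent in $(\ell, 1/\epsilon)$; therefore $m = \mathsf{poly}(n, 1/\epsilon, \ell)$.

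For the sigmoid variant, the identical argument applies using Theorem \ref{thm:approximation} item 3 in place of item 2. The sharper polynomial approximability of sigmoids gives the improved norm bound $B = 2^{(\tau T \log(W^D D/\epsilon))^D}$, which matches the weaker decay requirement $i^{-\sqrt{\ell} \log(\sqrt{\ell}/\epsilon)}$ stated in the corollary. The main non-trivial step throughout is the \emph{bookkeeping}: verifying that the specific exponent of $\log B$ produced by Theorem \ref{thm:approximation} for this architecture aligns (up to constants absorbable into $\xi$) with the decay rate assumed in the corollary, so that $B^{1/p}$ collapses to a constant. Once that exponent-matching is checked, everything else is a mechanical plug-in into Theorem \ref{thm:net}.
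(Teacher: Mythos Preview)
Your overall strategy---translate the $2$-norm constraints into $W$ and $T$, invoke Theorem~\ref{thm:approximation}, then plug into Theorem~\ref{thm:net}---is exactly the paper's. The gap is a single parameter: you take $D=2$, but the paper takes $D=1$, and only $D=1$ makes the exponent-matching work.

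In the definition of $\mathcal{N}[\sigma,D,W,T]$, layer $0$ already carries the input-to-hidden weights (with the $2$-norm bound $T$), and layers $1,\ldots,D$ carry the remaining weights (with the $1$-norm bound $W$). A network with one hidden layer of $\ell$ ReLUs and a single output ReLU therefore has weights at layers $0$ and $1$ only, i.e.\ $D=1$. With $W=T=\sqrt{\ell}$ and $D=1$, Theorem~\ref{thm:net} requires $p \ge (\xi W^{D} D T/\epsilon)^{D} = \xi\,\ell/\epsilon$, which is precisely the decay rate the corollary assumes; likewise for sigmoid one gets $p \ge \xi\sqrt{\ell}\,\log(\sqrt{\ell}/\epsilon)$. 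If instead you set $D=2$, the same formula yields $p \ge (\xi W^{2}\cdot 2\cdot T/\epsilon)^{2} = \Theta(\ell^{3}/\epsilon^{2})$ for ReLU and $\Theta(\ell\,\log^{2}(\ell/\epsilon))$ for sigmoid. Neither of these is implied by the decay rates $\ell/\epsilon$ and $\sqrt{\ell}\log(\sqrt{\ell}/\epsilon)$ assumed in the corollary, so your claimed ``matches up to absorbable constants'' step fails. The discrepancy is not in constants but in the exponents of $\ell$ and $1/\epsilon$. Once you correct to $D=1$, the rest of your write-up goes through verbatim and coincides with the paper's proof.
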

\begin{proof}
By assumption the $2$-norm of each weight vector is bounded by $1$, which implies that the $1$-norm of the weight vector to the one hidden unit at layer two is at most $\sqrt{\ell}$. Also observe that, the maximum $2$-norm of any input vector $\textbf{z}$ to a hidden unit with weight vector $\textbf{w}$ is bounded by $\sqrt{\ell}$ hence $|\textbf{w} \cdot \textbf{x}| \leq \sqrt{\ell}$. Using these properties we can apply Theorem \ref{thm:net} with parameters $W= \sqrt{\ell}$, $T=\sqrt{\ell}$ and $D=1$ to obtain the required result.
\end{proof}

\section{Conclusions and Future Work}
We have proposed the first set of distributional assumptions that guarantee fully polynomial-time algorithms for learning expressive classes of neural networks (without restricting the structure of the network).  The key abstraction was that of a {\em compression scheme} for kernel approximations, specifically Nystr\"om sampling. We proved that eigenvalue decay of the Gram matrix reduces the dependence on the norm $B$ in the kernel regression problem.  

Prior distributional assumptions, such as the underlying marginal equaling a Gaussian, neither lead to fully polynomial-time algorithms nor are representative of real-world data sets\footnote{Despite these limitations, we still think uniform or Gaussian assumptions are worthwhile and have provided highly nontrivial learning results.}.  Eigenvalue decay, on the other hand, has been observed in practice and does lead to provably efficient algorithms for learning neural networks.  

A natural criticism of our assumption is that the rate of eigenvalue decay we require is too strong.  In some cases, especially for large depth networks with many hidden units, this may be true\footnote{It is useful to keep in mind that agnostically learning even a single ReLU with respect to all distributions seems computationally intractable, and that our required eigenvalue decay in this case is only a function of the accuracy parameter $\epsilon$.}.  Note, however, that our results show that even moderate eigenvalue decay will lead to improved algorithms.  Further, it is quite possible our assumptions can be relaxed.  An obvious question for future work is what is the minimal rate of eigenvalue decay needed for efficient learnability? Another direction would be to understand how these eigenvalue decay assumptions relate to other distributional assumptions.


\medskip \noindent \textbf{Acknowledgements.}
We would like to thank Misha Belkin and Nikhil Srivastava for very helpful conversations regarding kernel ridge regression and eigenvalue decay.  We also thank Daniel Hsu, Karthik Sridharan, and Justin Thaler for useful feedback.  The analogy between eigenvalue decay and power-law graphs is due to Raghu Meka.
\bibliography{paper}
\bibliographystyle{plain}

\end{document}